\newcommand{\durl}[1]{\textcolor{blue}{\underline{\url{#1}}}}
\newcommand{\mc}[1]{\mathcal{#1}}
\newcommand{\bE}{\mathbb{E}}
\newcommand{\ra}{\rightarrow}
\DeclareMathOperator*{\argmax}{arg\,max}
\newtheorem{theorem}{Theorem}
\definecolor{dblue}{RGB}{98, 140, 190}
\definecolor{dgreen}{RGB}{113, 198, 113}
\definecolor{dpink}{RGB}{207, 166, 208}
\definecolor{dgold}{RGB}{197, 193, 170}
\newcounter{DaveDefCounter}
\newcommand{\ddef}[2]
{
\begin{mdframed}[roundcorner=1pt, backgroundcolor=white]
\vspace{1mm}
{\bf Definition \theDaveDefCounter} (#1): {\it #2}
\stepcounter{DaveDefCounter}
\end{mdframed}
}
\newcommand{\namecite}[1]{\citeauthor{#1}~(\citeyear{#1})}
\newcommand{\algname}{Covering options} 
\icmltitlerunning{Discovering Options for Exploration by Minimizing Cover Time}
\begin{document}

\twocolumn[
\icmltitle{Discovering Options for Exploration by Minimizing Cover Time}
\icmlkeywords{Reinforcement Learning}
\begin{icmlauthorlist}
\icmlauthor{Yuu Jinnai}{b}
\icmlauthor{Jee Won Park}{b}
\icmlauthor{David Abel}{b}
\icmlauthor{George Konidaris}{b}
\end{icmlauthorlist}

\icmlaffiliation{b}{Brown University, Providence, RI, United States}

\icmlcorrespondingauthor{Yuu Jinnai}{yuu\_jinnai@brown.edu}

\vskip 0.3in
]

\printAffiliationsAndNotice{}


\begin{abstract}
    One of the main challenges in reinforcement learning is solving tasks with sparse reward. 
    We show that the difficulty of discovering a distant rewarding state in an MDP is bounded by the expected cover time of a random walk over the graph induced by the MDP's transition dynamics.
    We therefore propose to accelerate exploration by constructing options that minimize cover time.
    The proposed algorithm finds an option which provably diminishes the expected number of steps to visit every state in the state space by a uniform random walk.
    We show empirically that the proposed algorithm improves the learning time 
    in several domains with sparse rewards.
\end{abstract}

\section{Introduction}

A major challenge in reinforcement learning is how an agent should explore its environment when the reward signal is sparse.
\namecite{machado2017laplacian} have addressed the sparse reward problem through the construction of temporally extended actions, commonly formalised as options~\cite{sutton1998reinforcement}.
Previous approaches develop techniques that lead to improved exploration in sparse reward problems, but it is still an open question as to how to explore in a near-optimal way in these tasks.


We introduce an option discovery method that explicitly aims to improve exploration in sparse reward domains by
minimizing the expected number of steps to reach an unknown rewarding state.
First, we model the behavior of an agent early in its learning process (that is, before observing the reward signal) as a uniform random walk over the graph induced by the MDP's transition dynamics.
We show that minimizing the graph \textit{cover time}---the number of steps required for a random walk to visit every state \cite{broder1989bounds}---reduces the expected number of steps required to reach an unknown rewarding state. 

We then introduce a polynomial time algorithm to find a set of options guaranteed to reduce the expected cover time using the transition function either given to or learned by the agent. 
Finding a set of edges that minimizes expected cover time is an extremely hard combinatorial optimization problem \cite{braess1968paradoxon,braess2005paradox}. Thus, our algorithm instead seeks to minimize the upper bound of the expected cover time given as a function of the algebraic connectivity of the graph Laplacian \cite{fiedler1973algebraic,broder1989bounds,chung1996spectral} using the heuristic method by \namecite{ghosh2006growing} that improves the upper bound of the expected cover time of a uniform random walk. 

Finally, we evaluate our option discovery algorithm in six discrete benchmark domains where the agent is given
the true MDP graph but must learn the location of the reward online. Our empirical results in toy problems demonstrate that the approach outperforms previous state-of-the-art methods. 

\section{Background}


\subsection{Reinforcement Learning}

Reinforcement learning defines the problem of learning a policy that maximizes the total expected reward obtained by an agent interacting with an environment.
The environment is often modeled as a Markov Decision Process (MDP) \cite{puterman2014markov}. An MDP is a five tuple $(\mc{S}, \mc{A}, T, R, \gamma)$, where $\mc{S}$ is a set of states, $\mc{A}$ is a set of actions, $T: \mc{S} \times \mc{A} \times \mc{S} \rightarrow [0, 1]$ is a state transition function, $R: \mc{S} \times \mc{A} \rightarrow \mathbb{R}$ is a reward function, $\gamma \rightarrow [0, 1]$ is a discount factor.

The agent selects actions according to a policy $\pi: \mc{S} \times \mc{A} \rightarrow [0, 1]$ mapping states to actions.
The expected total discounted reward from state $s$ following a policy $\pi$ is the value of the state:
\begin{equation}
    V^\pi(s) = R(s, \pi(s)) + \gamma \sum_{s' \in \mc{S}} T(s, \pi(s), s') V^\pi(s').
\end{equation}

This function is called a \textit{value function}. 
The action-value function of a policy is an expected total discount reward received by executing an action $a$ and then follow policy $\pi$:

\begin{equation}
    Q^\pi(s, a) = R(s, a) + \gamma \sum_{s' \in \mc{S}} T(s, a, s') V^\pi(s'). 
\end{equation}

The goal of the agent is to learn an optimal policy $\pi^*$ which maximizes the total discounted reward: $\pi^* = \argmax_\pi V^\pi$, 
with corresponding optimal value functions $V^* = \max_\pi V^\pi$ and $Q^* = \max_\pi Q^\pi$.


A state-transition in an MDP by a stationary policy $\pi$ can be modeled as a Markov chain $\{X_t\}$ where $P(X_{t+1} | X_{t}) = \sum_{a \in A} \pi(a|s) T(s, a, s') |_{X_{t+1} = s', X_{t} = s}$.
A state-transition graph $G = (V, E)$ of an MDP is a graph with nodes representing the states in the MDP and the edges representing state adjacency in the MDP. More precisely, $V = S$, $e(s, s') \in E \; \text{iff} \; \exists a T(s, a, s') > 0 \vee T(s', a, s) > 0$. 
An adjacency matrix represents a graph with a square matrix of size $|S| \times |S|$ with $(i, j)$-value being 1 if $e(s_i, s_j) \in E$ and 0 otherwise.

\subsection{Options}

Temporally extended actions offer great potential for mitigating the difficulty of solving difficult MDPs in planning and reinforcement learning \cite{sutton1999between}.
We use one such framework, the \textit{options framework} \cite{sutton1999between}, which defines a temporally-extended
action as follows.
\ddef{option}{An option $o$ is defined by a triple: $(\mc{I}, \pi, \beta)$ where:
    \begin{itemize}
        \item $\mc{I} \subseteq \mc{S}$ is a set of states where the option can initiate,
        \item $\pi : \mc{S} \ra \Pr(\mc{A})$ is a policy,
        \item $\beta : \mc{S} \ra [0, 1]$, is a termination condition.
    \end{itemize}
}

Many previous approaches propose methods to generate options based on heuristics and demonstrate the effectiveness in experimental evaluation \cite{iba1989heuristic,mcgovern2001automatic,menache2002q,stolle2002learning,Simsek04,csimcsek2009skill,konidaris2009skill,machado2017eigenoption,eysenbach2018diversity}.

As the options framework is general and difficult to analyze, we focus on \textit{point options} \cite{jinnai2018finding},
a simple subclass of options where both the initiation set and termination condition consist of a single state. 
\ddef{Point option}{
    A {\bf point option} is any option whose initiation set and termination set are each true for exactly one state each:
\begin{align}
        |\{s \in \mc{S} : \mc{I}(s) = 1\}| &= 1, \\
        |\{s \in \mc{S} : \beta(s) > 0\}| &= 1, \\
        |\{s \in \mc{S} : \beta(s) = 1\}| &= 1.
\end{align}
}
Adding a point
option corresponds to inserting a single edge into the graph induced by the MDP dynamics. We refer to the state with $\beta(s) = 1$ as the subgoal state.

Point options are a useful subclass to consider for several reasons. A point option is a simple model of a temporally extended action whose effect on the MDP graph is easy to specify, and 
whose policy can often be efficiently computed. Moreover, any  option with a single termination state can be represented as a collection of point options.

\section{Cover Time}
\label{sec:cover-time}


In this section, we model the behavior of the agent {\it at the first episode} as a random walk induced by a fixed stationary policy, and show an upper bound to {\it the expected cover time} of the random walk.
We model the behavior of a fixed stationary policy for two reasons.
First, it is a reasonable model for an agent with no prior knowledge of the task. Second, it serves as a {\it worst-case} analysis: it is reasonable to assume that most of the cases efficient exploration algorithms such as UCRL \cite{ortner2007logarithmic,jaksch2010near} explore faster than a fixed stationary policy. Thus the upper bound we show for the expected cover time is applicable to other algorithms.

Intuitively, the expected cover time is the time required for a random walk to visit all the vertices in a graph \cite{broder1989bounds}. To define it formally, we first define the hitting time of any discrete Markov chain $\{X_t\}$. Let us assume this Markov chain has the state space of $V$, the vertices of graph $G$.
The hitting time $H_{ij}$, where $i,j \in V$, is 
\begin{equation}
    H_{ij} = \inf\big\{t: X_t = j | X_0 = i\big\}.
\end{equation}
In other words, $H_{ij}$ is the greatest lower bound on the number of time step $t$ required to reach 
state $j$ after starting at state $i$. Cover time starting from state $i$ is defined as:
\begin{equation}
   C_i = \max_{j \in V } H_{ij},
\end{equation}
and the expectation of cover time, $\bE[C(G)]$, is the expected cover time of trajectories induced by the random walk, maximized over the starting states \cite{broder1989bounds}.

As such, the expected cover time bounds how likely a random walk leads to a rewarding state. 

\begin{theorem}
    Assume a stochastic shortest path problem to reach a goal $g$ where a non-positive reward $r_c \leq 0$ is given for non-goal states and $\gamma = 1$.
    Let $P$ be a random walk transition matrix: $P(s, s') = \sum_{a \in A} \pi(s) T(s, a, s')$:
    \begin{align*}
        \forall g: V_g^\pi(s) 
                            &\geq r_c \bE[C(G)],
    \end{align*}
    where $C(G) = \max_{s \in S} C_s(G)$ and $C_s(G)$ is a cover time of a transition matrix $P$ starting from state $s$.
\end{theorem}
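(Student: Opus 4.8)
The plan is to show that the value function under a proper stationary policy is, up to the factor $r_c$, exactly the expected hitting time of the goal, and then to dominate that hitting time by the expected cover time.

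First I would rewrite $V_g^\pi$ as a hitting-time quantity. In a stochastic shortest path problem the goal $g$ is absorbing and contributes no further reward, and with $\gamma = 1$ the return collected from a start state $s$ is simply $r_c$ times the number of steps spent at non-goal states before absorption, i.e.\ $r_c H_{sg}$ along the realized trajectory. Taking expectations over the random walk generated by $P$,
\begin{equation*}
    V_g^\pi(s) = r_c\, \bE[H_{sg}].
\end{equation*}
For this to be meaningful we need $\bE[H_{sg}] < \infty$; this is where I would invoke the standing assumption that $G$ is connected (which is also what makes the cover time finite), so that the induced chain is an irreducible Markov chain on a finite state space and hence hits every vertex in finite expected time.

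Next I would bound the hitting time by the cover time. By definition $C_s(G) = \max_{j \in V} H_{sj} \ge H_{sg}$ holds trajectory-by-trajectory, so $\bE[C_s(G)] \ge \bE[H_{sg}]$; and since $C(G) = \max_{s' \in S} C_{s'}(G) \ge C_s(G)$ pointwise, $\bE[C(G)] \ge \bE[H_{sg}]$.

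Finally I would combine the two observations: because $r_c \le 0$, multiplying the inequality $\bE[H_{sg}] \le \bE[C(G)]$ by $r_c$ reverses it, giving $r_c\,\bE[H_{sg}] \ge r_c\, \bE[C(G)]$, i.e.\ $V_g^\pi(s) \ge r_c\, \bE[C(G)]$, which is the claim (the case $r_c = 0$ being trivial since both sides vanish). I expect the only real subtlety to be the first step: justifying that the walk reaches $g$ almost surely with finite expected time and that $V_g^\pi(s)$ genuinely equals $r_c\,\bE[H_{sg}]$ rather than diverging — this rests on connectivity of $G$ (equivalently, properness of the policy in the SSP sense). Everything after that is a one-line monotonicity argument: cover time dominates every hitting time, and multiplication by the non-positive constant $r_c$ flips the inequality into the stated lower bound on the value.
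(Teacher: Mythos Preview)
Your proposal is correct and follows essentially the same route as the paper: write $V_g^\pi(s) = r_c\,\bE[H_{sg}]$, dominate $H_{sg}$ by $C_s(G)$ and then by $C(G)$, and flip the inequality using $r_c \le 0$. The paper's proof is just this four-line chain without the surrounding justification; your added remarks on connectivity/properness and finiteness of $\bE[H_{sg}]$ are sound elaborations rather than a different argument.
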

\begin{proof}
    The value of state $s$ is $r_c$ times the expected number of steps to reach the goal state. Thus, 
    \begin{align*}
        V_g^\pi(s) &= r_c \bE[ H_{sg} ] \\
                &\geq r_c \bE[ \max_{s' \in S} H_{ss'} ] \\
                &= r_c \bE[ C_s(G) ] \\
                &\geq r_c \bE[ C(G) ]
    \end{align*}
\end{proof}

The theorem suggests that {\it the smaller the expected cover time is, the easier the exploration tends to be}. Now the question is how to reduce the expected cover time of the random walk without prior reward information of the task.

Let $P$ be a random walk induced by a fixed policy $\pi$ in an MDP.
\namecite{broder1989bounds} showed that the expected cover time $\bE[C(G)]$ of a random walk $P$ can be bounded using the second largest eigenvalue $\lambda_{k-1}(P)$:
\begin{equation}
\label{eq:bound}
    \bE[C(G)] \leq \frac{n^2 \ln n}{1 - \lambda_{k-1}(P)} (1 + o(1)),
\end{equation}
where $n = |V|$ and $k$ is the number of eigenvalues. 

The normalized graph Laplacian of an unweighted undirected graph is defined as \cite{chung1996spectral}:
\begin{equation}
    \mathcal{L} = I - T^{-1/2} A T^{-1/2},
\end{equation}
where $I$ is an identity matrix.
The random walk matrix can be written in terms of the Laplacian:
\begin{equation}
    P = T^{-1} A = T^{-1/2} (I - \mathcal{L}) T^{1/2}.
\end{equation}

Because $P$ and $I - \mathcal{L}$ are similar matrices, they have the same eigenvalues and eigenvectors. Thus, $\lambda_{k-1}(P) = 1 - \lambda_2(\mathcal{L})$, where $\lambda_2(\mathcal{L})$ is the second smallest eigenvalue of $\mathcal{L}$. Thus, from Equation \ref{eq:bound},

\begin{equation}
    \bE[C(G)] \leq \frac{n^2 \ln n}{\lambda_{2}(\mathcal{L})} (1 + o(1)).
\end{equation}

Thus, {\it the larger the $\lambda_{2}(\mathcal{L})$ is, the smaller upper bound of the expected cover time is}.




The second smallest eigenvalue of $\mathcal{L}$ is known as the {\it algebraic connectivity} of the graph and its corresponding eigenvector is called {\it Fiedler vector}.
There are several operations we can apply to the graph to increase the algebraic connectivity.
First, adding nodes to the graph can increase the algebraic connectivity. However, this increases the number of nodes $n$, and thus the cover time does not always improve.
Second, we can rewire edges in the graph. However, rewiring edges is undesirable as it amounts to removing primitive actions from the MDP
which may damage the agent's ability to optimally solve the MDP. 
Third, we can add edges to the graph, which in the reinforcement learning setting amounts to adding options to the agent. This strategy preserves optimality as it does not remove any primitive actions.
Therefore, adding edges (i.e. options) is a reliable way to reduce the cover time without potentially sacrificing optimality.

As far as we are aware, we are the first to introduce the concept of the cover time to reinforcement learning.

\subsection{Empirical Evaluation}

\begin{figure*}
    \centering
    \subfloat[($\lambda_2$) $\sim$ (expected cover time)]{\label{fig:connectivity-covertime} \includegraphics[width=0.4\textwidth]{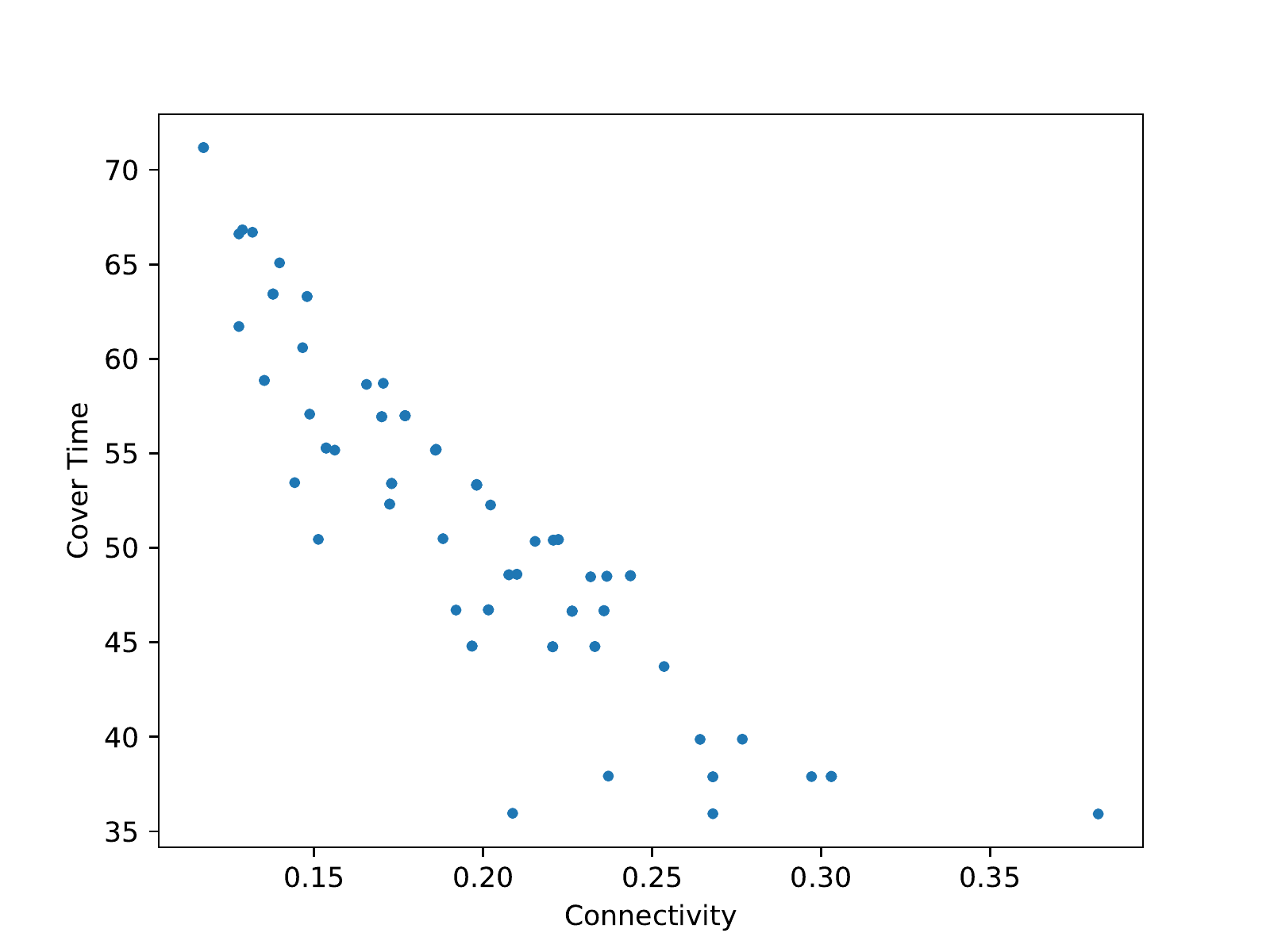}}
    \subfloat[(expected cover time) $\sim$ (cost of the policy)]{\label{fig:covertime-hits} \includegraphics[width=0.4\textwidth]{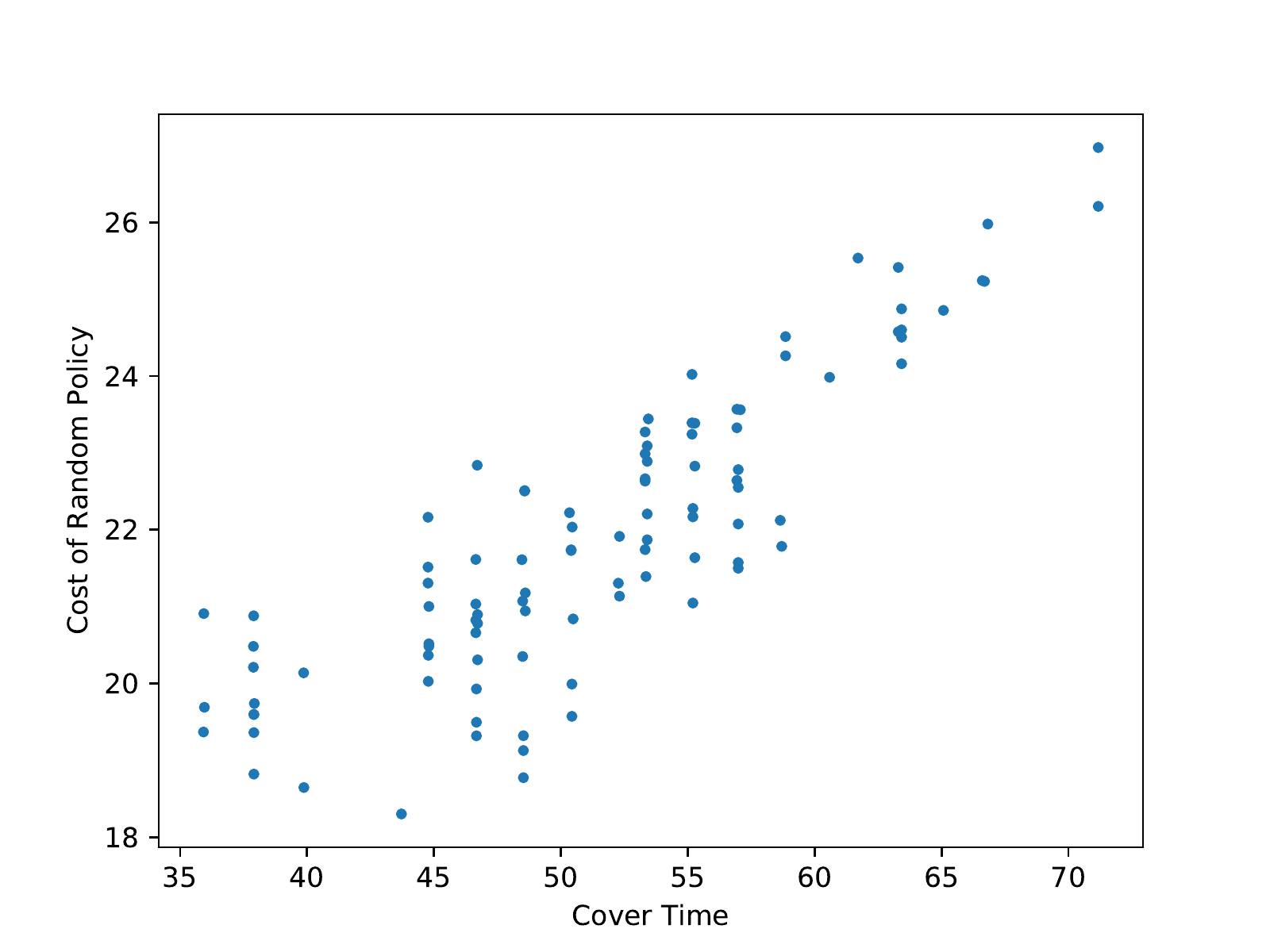}}
    \caption{(a) Relationship between the algebraic connectivity and the expected cover time of a random walk on randomly generated connected graph. The number of states is fixed to 10 and the edge density is fixed to $0.3$, thus the number of edges are equal for all tasks. (b) Relationship between the expected cover time of a random walk and the cost of random policy. The number of states is fixed to 10 and the edge density is fixed to $0.3$, thus the number of edges are equal for all tasks.} 
    \label{fig:covertime}
\end{figure*}

In this section 
we showed that the bigger the algebraic connectivity, the smaller the upper bound of the expected cover time.
Here, we empirically examine (1) the relationship between the algebraic connectivity and the cover time, and (2) the relationship between the cover time and the difficulty of an MDP.

We randomly generated shortest path problems and plotted the relationship between the value of a random policy, the cover time, and the algebraic connectivity of the state-space graph. 

We generated 100 random graphs by the following procedure.
Each graph is connected and has 10 nodes with the edge density  fixed to $0.3$.
To generate a connected graph, we use the following procedure.
First, we start with an empty graph. We pick one node from the existing graph and add an edge to connect to a new node. We follow this procedure for the number of nodes $n$, generating a random tree of size $n$. Then, we pick an edge uniformly randomly from $E^c$ until the edge density reaches the threshold.
We approximated the expected cover time of a random walk on a random graph by sampling 10,000 trajectories induced by the random walk and computing their average cover time.

We generated a shortest path problem by picking an initial state and a goal state randomly for each graph.
The agent can transition to each neighbor with a cost of $1$.

Figure \ref{fig:connectivity-covertime} shows the relationship of the algebraic connectivity and the expected cover time of the random walk induced by a uniform random policy. The result shows that the random walk tends to have smaller expected cover time when the underlying state-transition graph has larger algebraic connectivity.
Figure \ref{fig:covertime-hits} shows the expected cost of a random policy from the initial state to reach the goal state. The cost of a random policy shows a correlation to the cover time.



\section{\algname{}}
\label{sec:algorithm}

We now describe an algorithm to automatically find options that minimize the expected cover time. The algorithm is approximate, since 
the problem of finding such a set of options is computationally difficulty; it is thought to be NP-hard, but noboby so far has proven that. Even a good solution is hard to find due to the Braess's paradox \cite{braess1968paradoxon,braess2005paradox} which states that 
the expected cover time does not monotonically decrease as edges are added to the graph.

Thus, 
expected cover time is often minimized indirectly via maximizing algebraic connectivity \cite{fiedler1973algebraic,chung1996spectral}.
The expected cover time is upper bounded by quantity involving the
algebraic connectivity, and by maximizing it the bound can be minimized \cite{broder1989bounds}.
Adding a set of edges to maximize the algebraic connectivity is  NP-hard \cite{mosk2008maximum}, so we use the approximation method by \namecite{ghosh2006growing}. 

The algorithm is as follows:
\begin{enumerate}
    \item Compute the second smallest eigenvalue and its corresponding eigenvector (i.e., the Fiedler vector) of the Laplacian  of the state transition graph $G$.
    \item Let $v_i$ and $v_j$ be the state with largest and smallest value in the eigenvector respectively. Generate two point options; one with $\mc{I}=\{v_i\}$ and $\beta=\{v_j\}$ and the other one with $\mc{I}=\{v_j\}$ and $\beta=\{v_i\}$. Each option policy is the optimal path from the initial state to the termination state.
    \item Set $G \leftarrow G \cup \{(v_i, v_j)\}$ and repeat the process until the number of options reaches $k$.
\end{enumerate}

Intuitively, the algebraic connectivity represents how tightly the graph is connected. The Fiedler vector is an embedding of a graph to a line (single real value) where nodes connected by an edge tend to be placed close by.  A pair of nodes with the maximum and minimum value in the Fiedler vector are the most distant nodes in the embedding space. Our method greedily connects the two most distant nodes in the embedding. It is known that this operation greedily maximizes the algebraic connectivity to a first order approximation \cite{ghosh2006growing}.

The state transition graph $G$ must be given to or learned by the agent. We assume that the graph is strongly connected, so every state is reachable from every other state, and also that the graph is undirected.

The algorithm finds the edge which connects two nodes with largest and smallest value in the Fiedler vector ($v_i$ and $v_j$). \namecite{ghosh2006growing} proved that adding this edge to the original graph maximizes the algebraic connectivity greedily.
Thus, our algorithm generates options which maximize the algebraic connectivity, which in turn minimizes the upper bound of the expected cover time.
The algorithm is guaranteed to improve the upper bound and the lower bound of the expected cover time:

\begin{theorem}
    Assume that a random walk induced by a policy $\pi$ is a uniform random walk:
    \begin{equation}
    \label{eq:random-walk}
        P(u, v) := \begin{cases}
            1 / d_u  & \text{if $u$ and $v$ are adjacent,}\\
            0 & \text{otherwise},
            \end{cases}
    \end{equation}
    where $d_u$ is the degree of the node $u$.
    Adding two options by the algorithm improves the upper bound of the cover time if the multiplicity of the second smallest eigenvalue is one: 
    \begin{equation}
    \bE[C(G')] \leq \frac{n^2 \ln n}{\lambda_{2}(\mathcal{L}) + F} (1 + o(1)),
    \end{equation}
    where $\bE[C(G')]$ is the expected cover time of the augmented graph, $F = \frac{(v_i - v_j)^2}{6 / (\lambda_3 - \lambda_2) + 3 / 2}$, and $v_i, v_j$ are the maximum and minimum values of the Fiedler vector.
    If the multiplicity of the second smallest eigenvalue is more than one, then adding any single option cannot improve the algebraic connectivity.
\end{theorem}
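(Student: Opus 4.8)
The plan is to view the addition of the edge $\{v_i,v_j\}$ --- the two directed point options collapse to this single undirected edge --- as a positive semidefinite rank-one perturbation of the Laplacian, and then bound how much it raises the algebraic connectivity. Write $\mathcal{L}' = \mathcal{L} + ww^\top$ with $w = e_{v_i}-e_{v_j}$. Since $ww^\top \succeq 0$, the Courant--Fischer min-max characterization gives $\lambda_k(\mathcal L')\ge \lambda_k(\mathcal L)$ for every $k$; in particular $\lambda_2(\mathcal L')\ge\lambda_2(\mathcal L)$, so the Broder--Karlin bound $\bE[C(G)]\le \frac{n^2\ln n}{\lambda_2(\mathcal L)}(1+o(1))$ can only improve. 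The content of the theorem is to make ``improve'' quantitative.

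For the quantitative step I would diagonalize $\mathcal L = \sum_k \lambda_k q_kq_k^\top$ with $q_1\propto\mathbf 1$ and $q_2$ the unit Fiedler vector, so that $\langle w,q_1\rangle = 0$, $\langle w,q_2\rangle = v_i - v_j$, and $\sum_{k\ge 3}\langle w,q_k\rangle^2 = \|w\|^2 - (v_i-v_j)^2 = 2-(v_i-v_j)^2$. Because $\lambda_2$ is simple and the Fiedler vector of a connected graph is non-constant (so $v_i\neq v_j$), $\lambda_2(\mathcal L')$ is the unique zero in $(\lambda_2,\lambda_3)$ of the strictly increasing secular function
\[
 \phi(\theta)\;=\;1-\frac{(v_i-v_j)^2}{\theta-\lambda_2}+\sum_{k\ge 3}\frac{\langle w,q_k\rangle^2}{\lambda_k-\theta}
\]
(and in the boundary case $\lambda_2(\mathcal L')=\lambda_3$, i.e. $\phi$ has no zero there, the claim is immediate since $F<\lambda_3-\lambda_2$). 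Hence it suffices to show $\phi(\lambda_2+F)\le 0$. Since $(v_i-v_j)^2\le 2(v_i^2+v_j^2)\le 2$, we get $F\le \frac{2}{6/(\lambda_3-\lambda_2)+3/2}\le \tfrac12(\lambda_3-\lambda_2)$, so $\lambda_k-(\lambda_2+F)\ge \lambda_3-\lambda_2-F\ge\tfrac12(\lambda_3-\lambda_2)$ for $k\ge 3$, and therefore
\[
 \phi(\lambda_2+F)\;\le\;1-\frac{(v_i-v_j)^2}{F}+\frac{4}{\lambda_3-\lambda_2}\;=\;1-\Big(\tfrac32+\tfrac{6}{\lambda_3-\lambda_2}\Big)+\frac{4}{\lambda_3-\lambda_2}\;=\;-\tfrac12-\frac{2}{\lambda_3-\lambda_2}\;<\;0.
\]
Thus $\lambda_2(\mathcal L')\ge \lambda_2(\mathcal L)+F$, and substituting into the Broder--Karlin bound yields the stated inequality. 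The constants $6$ and $3/2$ in $F$ are exactly what make this last line comfortably negative. For the multiplicity claim, suppose $\lambda_2(\mathcal L)=\lambda_3(\mathcal L)$: adding any one option is still a rank-one PSD update, and eigenvalue interlacing for rank-one perturbations gives $\lambda_2(\mathcal L)\le \lambda_2(\mathcal L')\le \lambda_3(\mathcal L)=\lambda_2(\mathcal L)$, forcing equality, so the algebraic connectivity --- hence the bound --- is unchanged regardless of which edge is inserted.

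The step I expect to be the real obstacle is the middle one. The \emph{first-order} change of $\lambda_2$ under adding edge $\{v_i,v_j\}$ is the standard quantity $(v_i-v_j)^2$ --- this is what justifies the algorithm's greedy choice of the extreme entries of the Fiedler vector, following \namecite{ghosh2006growing} --- but the theorem demands an honest lower bound on the \emph{actual} increase, so one must dominate all the higher-order contributions using only the spectral gap $\lambda_3-\lambda_2$; the secular-equation argument above is the route I would take, and the only extra case needing a (trivial) word is $v_i=v_j$, impossible for a connected graph. One caveat worth flagging: inserting an edge also changes node degrees, so for the \emph{normalized} Laplacian the update is not literally rank-one; making the statement hold verbatim for $\mathcal L$ either restricts to (near-)regular graphs or carries the degree change as a lower-order correction, whereas for the combinatorial Laplacian $D-A$ the argument above is exact, which is the setting of \namecite{ghosh2006growing}.
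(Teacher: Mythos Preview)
Your proposal follows the same high-level skeleton as the paper --- lower-bound the new algebraic connectivity by $\lambda_2+F$, substitute into the Broder--Karlin cover-time bound, and handle the degenerate case $\lambda_2=\lambda_3$ via eigenvalue interlacing --- but it is substantially more complete. The paper's proof simply \emph{asserts} the inequality $\lambda_2(\mathcal L')\ge \lambda_2(\mathcal L)+F$ (with the displayed line even containing a typo, $\lambda_2\ge\lambda_2+F$) and attributes the greedy choice to \namecite{ghosh2006growing}, then plugs into the cover-time bound; no derivation of the quantitative increase is given. You, by contrast, supply an actual argument: write the edge insertion as the rank-one update $\mathcal L'=\mathcal L+ww^\top$, locate $\lambda_2(\mathcal L')$ as the root of the secular function on $(\lambda_2,\lambda_3)$, and verify $\phi(\lambda_2+F)\le 0$ using $(v_i-v_j)^2\le 2$ and $F\le\tfrac12(\lambda_3-\lambda_2)$. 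That computation is correct and explains exactly why the constants $6$ and $3/2$ appear in $F$, which the paper leaves opaque. For the multiplicity clause both you and the paper invoke interlacing; your phrasing (rank-one PSD perturbation) is the same content as the paper's citation of \namecite{haemers1995interlacing}.

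Your closing caveat is also on point and goes beyond the paper: the cover-time bound in Section~\ref{sec:cover-time} is stated for the \emph{normalized} Laplacian, while the rank-one update $\mathcal L'=\mathcal L+ww^\top$ is exact only for the \emph{combinatorial} Laplacian (adding an edge perturbs $D$ as well, so for the normalized operator the change is rank three, not one). The paper does not address this discrepancy; your remark that the argument is clean for $D-A$ and only approximate for $I-D^{-1/2}AD^{-1/2}$ is the right diagnosis.
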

\begin{proof}
    Assume the multiplicity of the second smallest eigenvalue is one.
    Let $\mathcal{L}'$ be the graph Laplacian of the graph with an edge inserted to $\mathcal{L}$ using the algorithm by \namecite{ghosh2006growing}.
    By adding a single edge, the algebraic connectivity is guaranteed to increase at least by $F$:
    \begin{equation}
        \lambda_2 \geq \lambda_2 + \frac{(v_i - v_j)^2}{6 / (\lambda_3 - \lambda_2) + 3 / 2},
    \end{equation}
    and the upper bound of the cover time is guaranteed to decrease:
    \begin{align*}
        \bE[C(G')] &\leq \frac{n^2 \ln n}{\lambda_{2}} (1 + o(1)) \\
        &\leq \frac{n^2 \ln n}{\lambda_{2} + \frac{(v_i - v_j)^2}{6 / (\lambda_3 - \lambda_2) + 3 / 2}} (1 + o(1)).
    \end{align*}
    
    As $\frac{(v_i - v_j)^2}{6 / (\lambda_3 - \lambda_2) + 3 / 2}$ is positive,
    \begin{equation}
        \frac{n^2 \ln n}{\lambda_{2} + \frac{(v_i - v_j)^2}{6 / (\lambda_3 - \lambda_2) + 3 / 2}} (1 + o(1)) < \frac{n^2 \ln n}{\lambda_{2}} (1 + o(1)),
    \end{equation}
    thus the upper bound is guaranteed to decrease.
    
    Assume the second smallest eigenvalue is more than one. Then, $\lambda_2(\mathcal{L})= \lambda_3(\mathcal{L})$.
    From eigenvalue interlacing \cite{haemers1995interlacing}, for any edge insertion, $\lambda_2(\mathcal{L}) \leq \lambda_2(\mathcal{L'}) \leq \lambda_3(\mathcal{L})$.
    Thus, $\lambda_2(\mathcal{L'}) = \lambda_2(\mathcal{L})$.
\end{proof}


As in the work by \namecite{machado2017laplacian}, our algorithm can be generalized to the function approximation case using an incidence matrix instead of an adjacency matrix.


\subsection{Comparison to Eigenoptions}

\namecite{machado2017laplacian} proposed a method to generate options using the Laplacian eigenvectors.
The proposed algorithm is similar to eigenoptions but different in several aspects.
First and foremost, \algname{} explicitly seeks to speed up the exploration in reinforcement learning by maximizing the algebraic connectivity to improve the upper bounds of the cover time. On the other hand, while the eigenoption also uses the graph Laplacian for option discovery, their method is repurposed from a feature construction method. As such, eigenoptions are constrained to be orthogonal to each other. While this constraint is beneficial for representation learning, it is not helpful for constructing efficient options.
Second, they are computing different optimization problems. The $k$-th covering option is the one minimizing the algebraic connectivity of the graph augmented with $1$ to $k-1$-th options. The $k$-th eigenoption minimizes the algebraic connectivity of the original subject to the constraint that the option has to be orthogonal to $1$ to $k-1$-th options. We did not find analytical results for how the orthogonal constraint can contribute to minimizing the algebraic connectivity or the expected cover time. 
Thrid, \algname{} is fast to compute as it only needs to compute the Fiedler vector. Although computing the whole graph spectrum is a heavy matrix operation, the Fiedler vector can be computed efficiently even for very large graphs \cite{koren2002ace}.

\section{Empirical Evaluation}
\label{sec:evaluation}

We used six MDPs in our empirical study: a 9x9 grid, a four-room gridworld, Taxi, Towers of Hanoi, Parr's maze, and Race Track.
9x9grid, four-room, and Parr's maze \cite{parr1998reinforcement}, a 2-dimensional grid pathfinding problem where the task is to reach a specific location. The agent can move in four directions but cannot cross walls.
The task in Taxi \cite{dietterich2000hierarchical} is to pick-up passengers and sends them to their destination. Only one passenger can ride on the taxi at the same time. 
Towers of Hanoi consists of three pegs of different-size discs sorted in decreasing order of size on one of the pegs. The goal is to move all discs from their initial peg to a goal peg while keeping the constraint that a smaller disc is above a larger one.
In the Race Track task the agent must reach the finish line by driving a car. The car position and the velocity are discrete. The agent can change the horizontal and vertical velocity by +1, -1, or 0 in each step. If the car hits the track boundary, it is moved back to the starting position.

We compared the performance of \algname{}, eigenoptions \cite{machado2017laplacian}, and betweenness options \cite{csimcsek2009skill}. We compare against these methods because they are the state-of-the-art option generation methods which do not require reward information.
\namecite{machado2017laplacian} proposed to generate a set of options which initialize at every state and terminate at the states which have highest/lowest values for each eigenvector.
To make the comparison simple, we consider a point option version of eigenoption method. For $k$-eigenvectors which correspond to the smallest $k$ eigenvalues, we generate a point option from a state with the highest/lowest value to a state the lowest/highest value in the eigenvector.
The point option constructed in this way minimizes the eigenvalue of each corresponding eigenvector.


First, we consider the case where the agent has perfect knowledge of the state-space graph in advance. Then, we consider the case where the agent does not have perfect knowledge but instead is able to sample the state-transition for given amount of steps. Finally, we evaluate an online option generation which discover options while training in the environment.

\subsection{Offline Option Discovery}

\begin{figure}[tb]
    \centering
    \subfloat[\algname{}]{\includegraphics[width=0.23\textwidth]{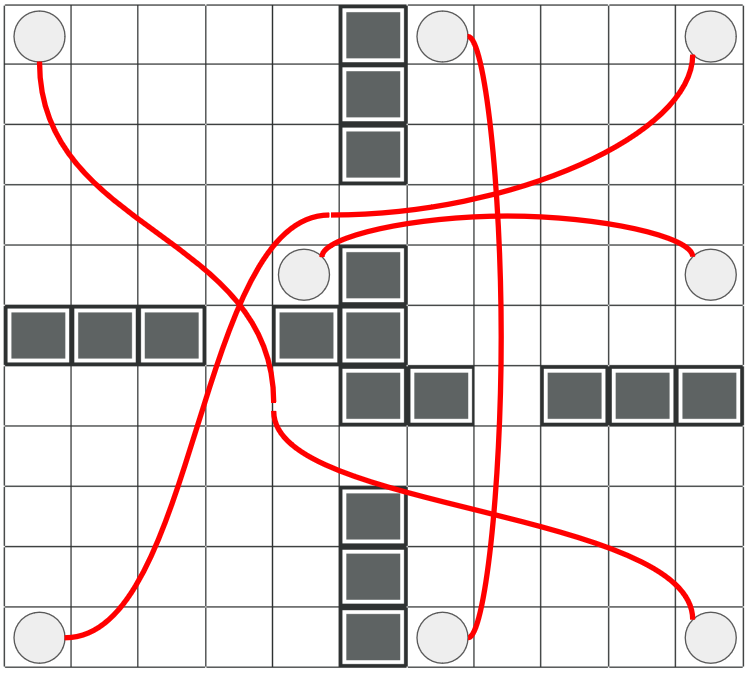}}
    \subfloat[\algname{}]{\includegraphics[width=0.23\textwidth]{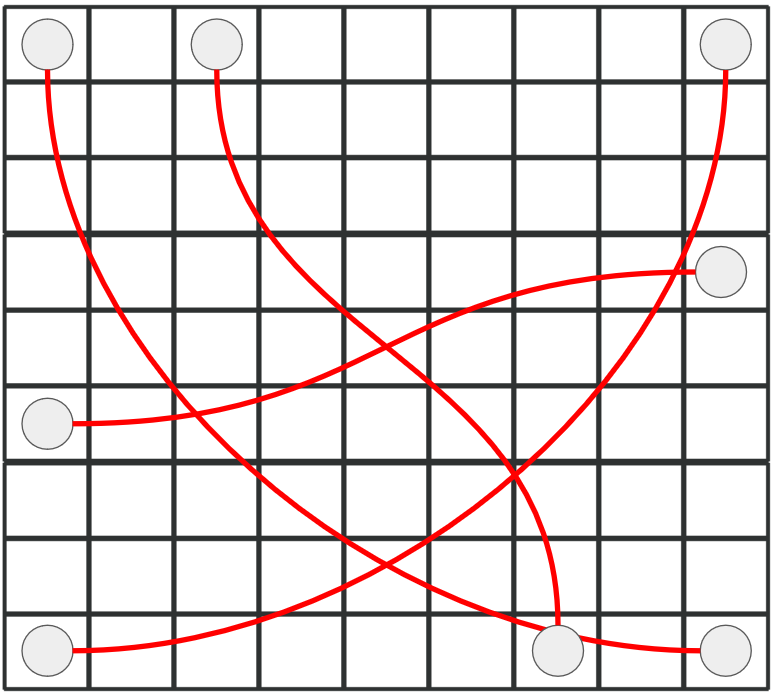}}
    
    \subfloat[Eigenoptions]{\includegraphics[width=0.23\textwidth]{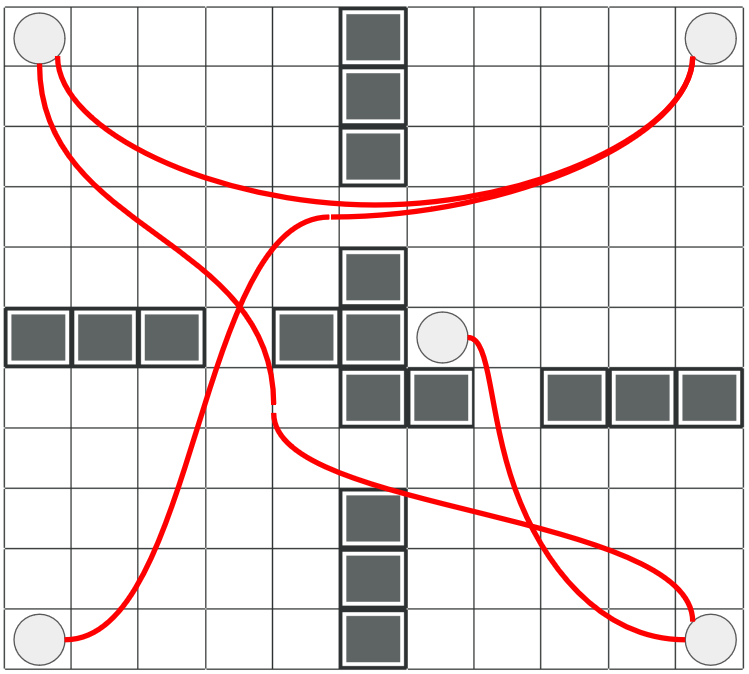}}
    \subfloat[Eigenoptions]{\includegraphics[width=0.23\textwidth]{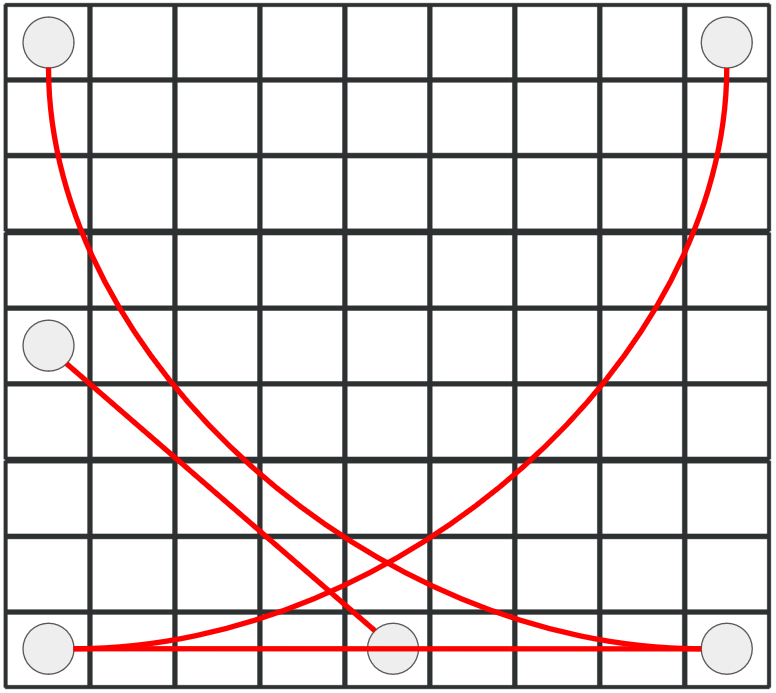}}
    
    \caption{Visualization of Fielder options vs. eigenoptions on four-room domain and 9x9 grid.}
    \label{fig:visuals}
\end{figure}

\begin{table}[htb]
    \centering
    \begin{tabular}{c|c|c}
        four-room        & $\lambda_2$ & Expected Cover Time \\ \hline
        \algname{}      & {\bf 0.065} & {\bf 672.0} \\
        Eigenoptions    & 0.054 & 695.9 \\
        No options      & 0.023 & 1094.8 \\ \hline \hline
        9x9 grid        & $\lambda_2$ & Expected Cover Time \\ \hline
        \algname{}      & {\bf 0.24} & {\bf 258.6} \\
        Eigenoptions    & 0.19 & 261.5 \\
        No options      & 0.12 & 460.5 \\
    \end{tabular}
    \caption{Comparison of the algebraic connectivity and the expected cover time. For \algname{} and eigenoptions we add 8 options.}
    \label{tab:connectivity}
\end{table}

\begin{figure}[htb]
    \centering
    \subfloat[\algname{} (four-room)]{\includegraphics[width=0.23\textwidth]{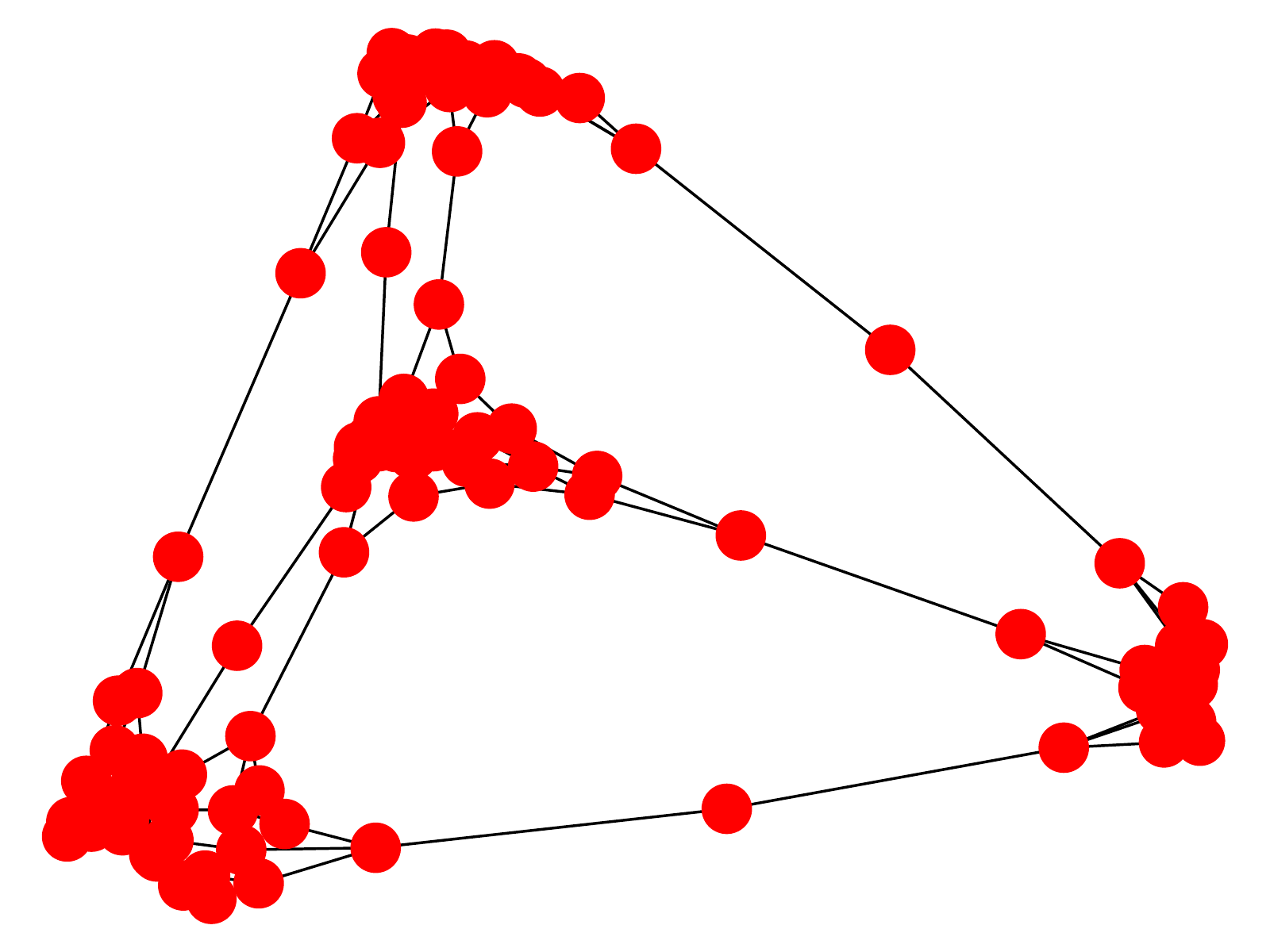}}
    \subfloat[\algname{} (9x9 grid)]{\includegraphics[width=0.23\textwidth]{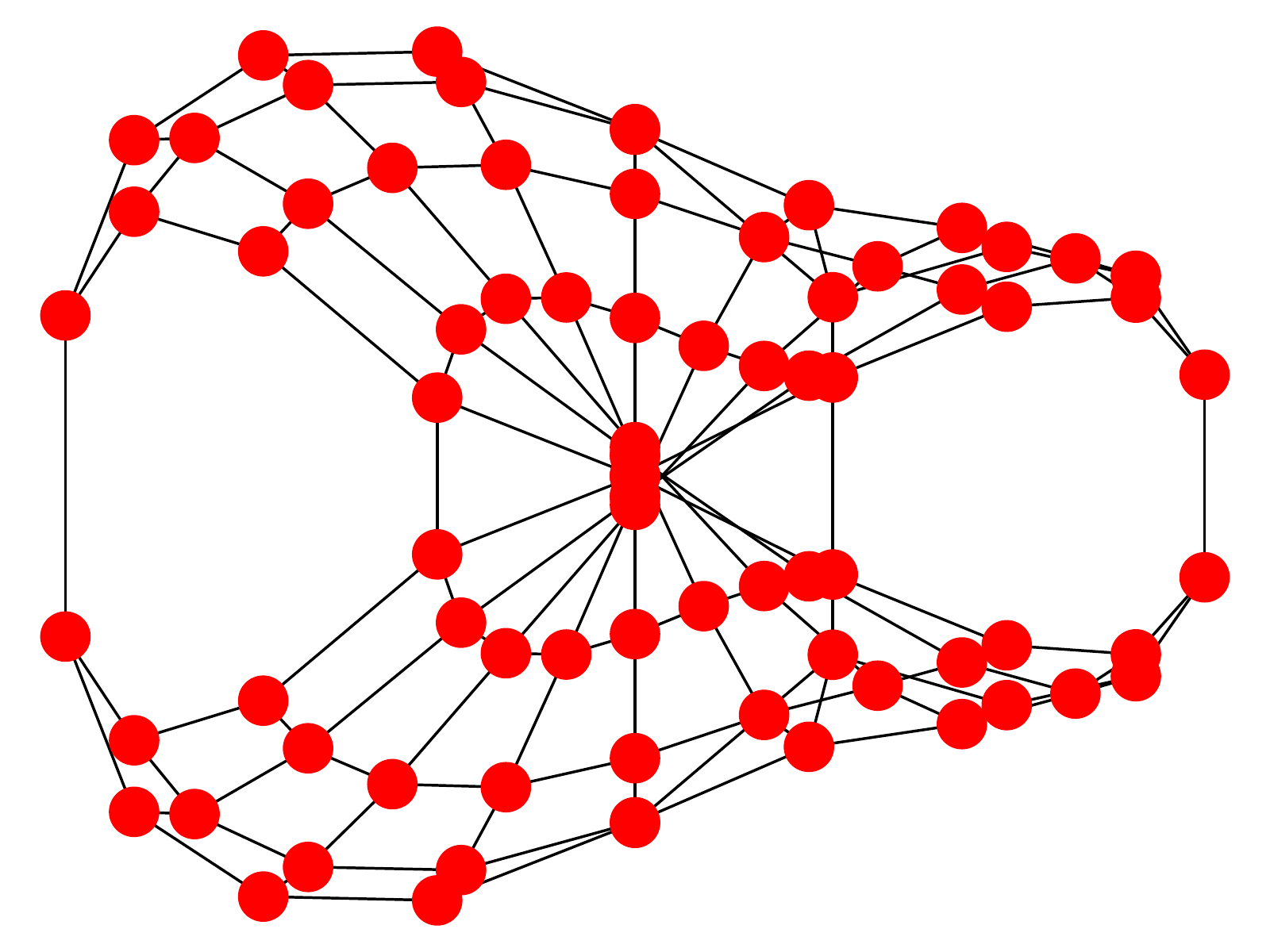}}
    
    \subfloat[Eigenoptions (four-room)]{\includegraphics[width=0.23\textwidth]{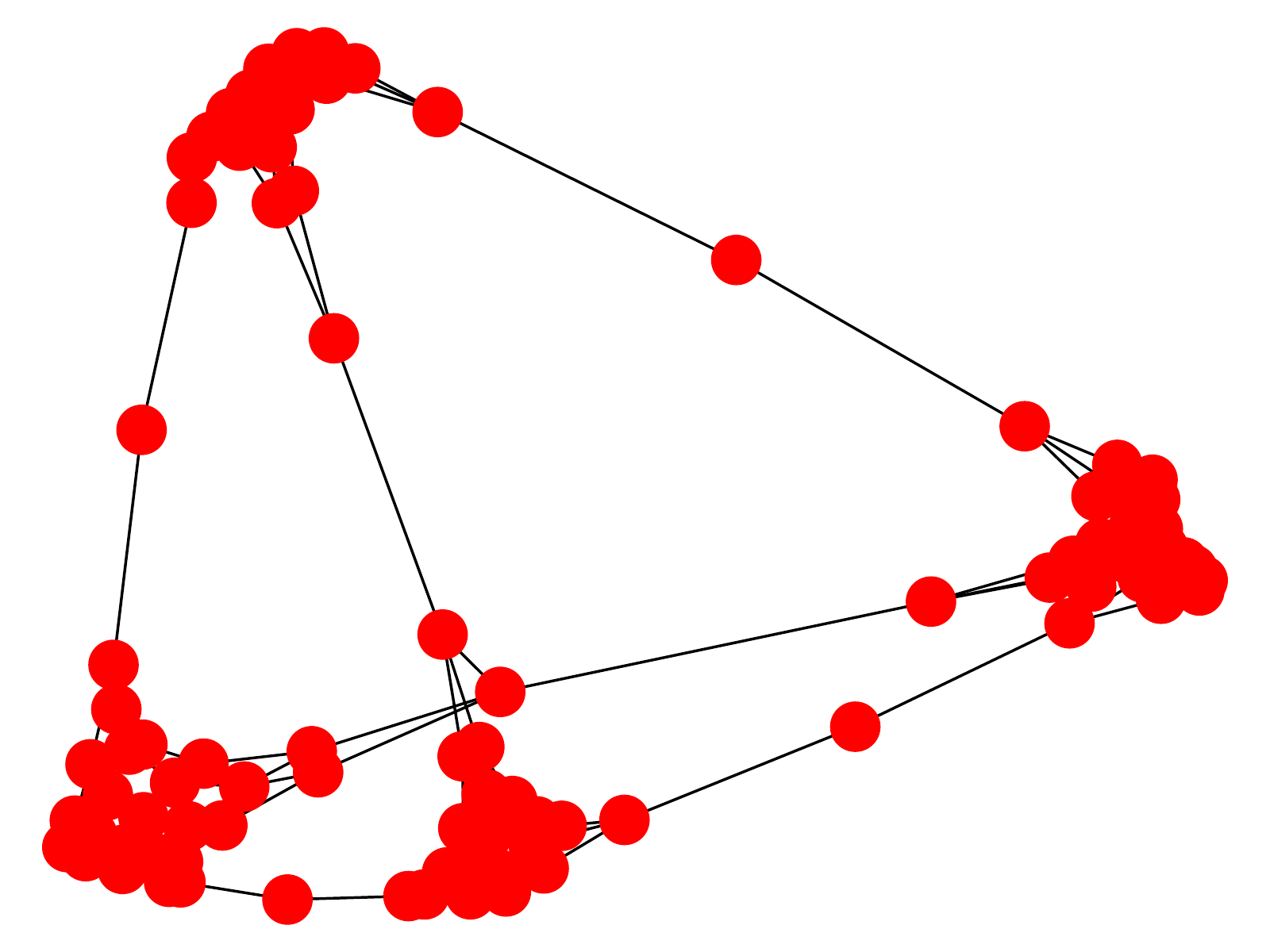}}
    \subfloat[Eigenoptions (9x9 grid)]{\includegraphics[width=0.23\textwidth]{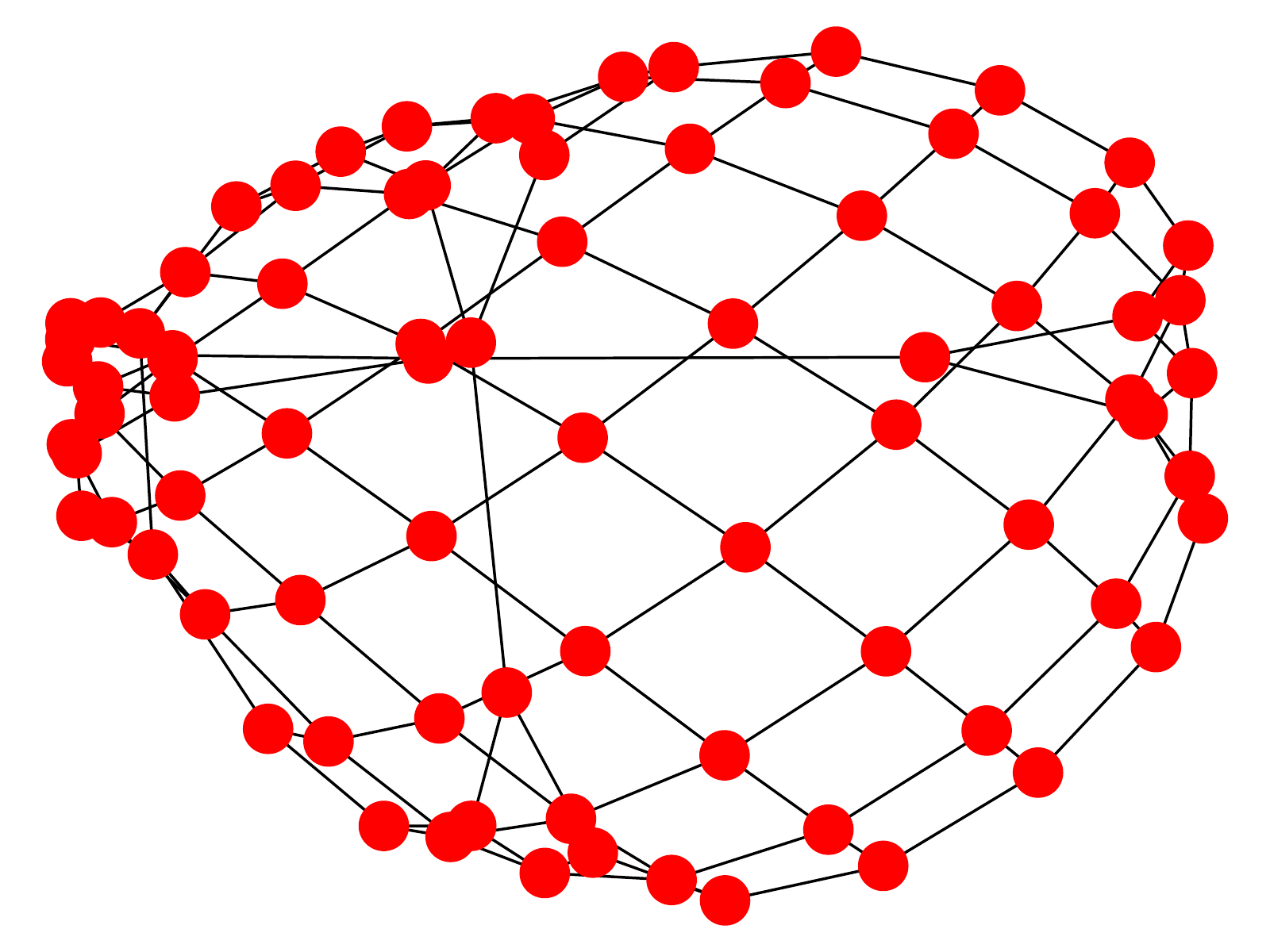}}
    
    \subfloat[No options (four-room)]{\includegraphics[width=0.23\textwidth]{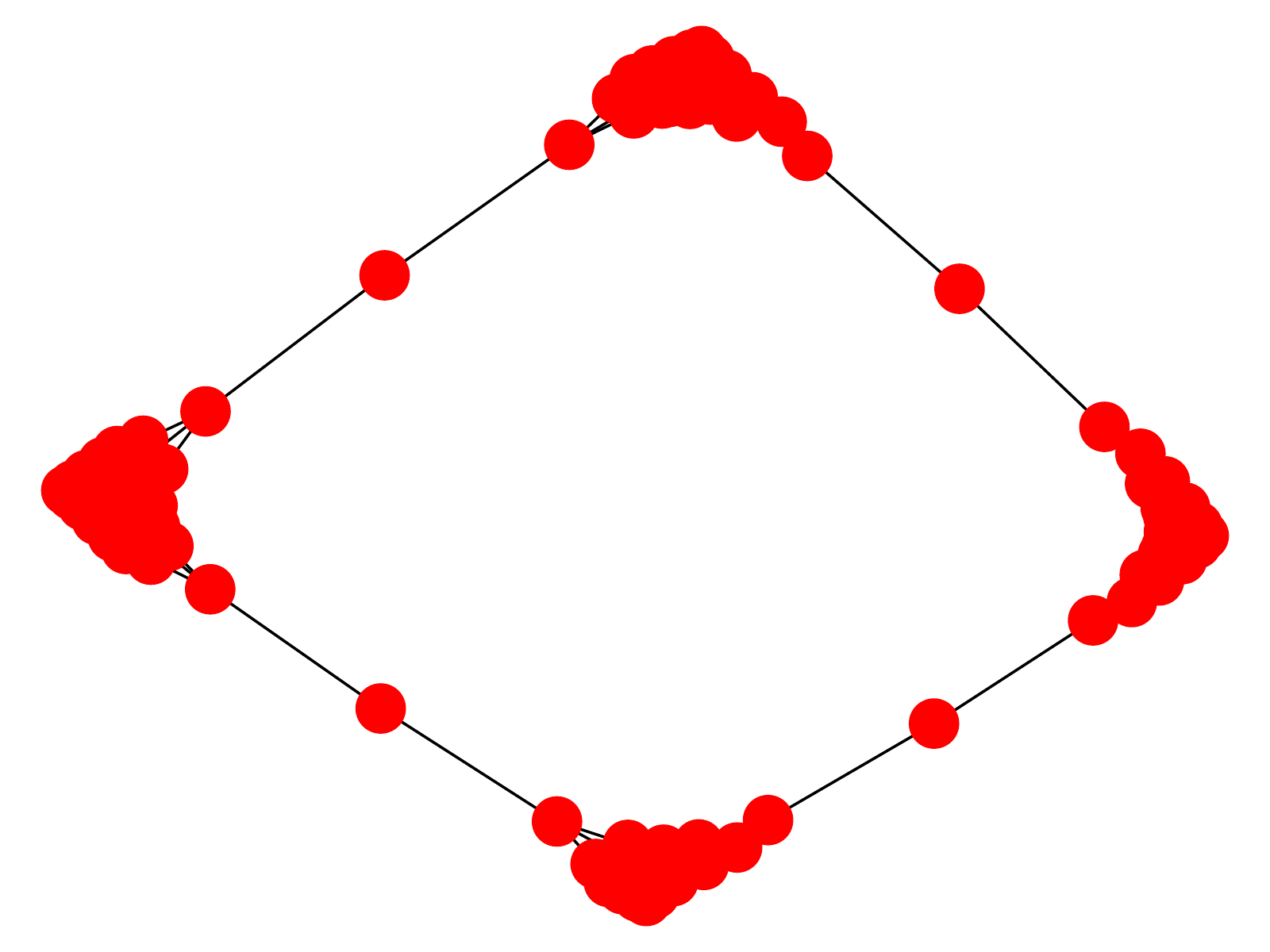}}
    \subfloat[No options (9x9 grid)]{\includegraphics[width=0.23\textwidth]{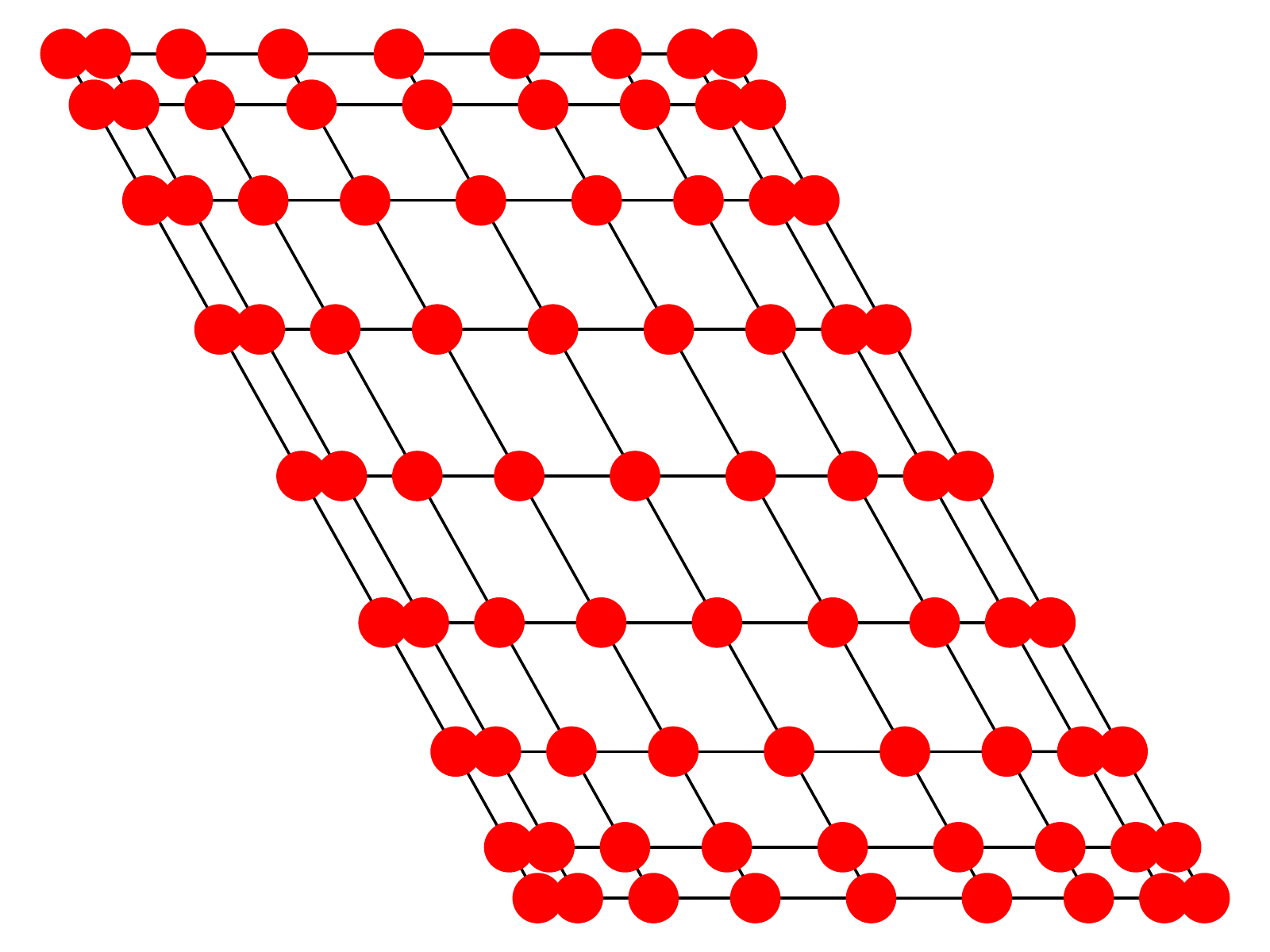}}
    \caption{Spectral graph drawing of the state-transition graph.}
    \label{fig:spectral-drawing}
\end{figure}

Figure \ref{fig:visuals} shows the eight options generated by \algname{}, and Eigenoptions on four-room domain and a 9x9 grid-world domain. Note that there are multiple possible set of options acquired by the algorithm and we showed one of the set of options.

Table \ref{tab:connectivity} shows the algebraic connectivity and  cover time. 
In both domains the \algname{} achieved larger algebraic connectivity and smaller expected cover time than the eigenoptions.
Figure \ref{fig:spectral-drawing} shows the spectral graph drawing \cite{koren2003spectral} of the state-transition graph augmented with the generated options. The spectral graph drawing is a technique to visualize the graph topology using eigenvectors of the graph Laplacian. Each node $n$ in the state-space graph is placed at $(v_2(n), v_3(n))$ in the $(x, y)$-coordinate, where $v_i$ is the $i$-th smallest eigenvector of the graph Laplacian. The figure indicates that the option generation methods are successfully connecting distant states. 

\begin{figure*}[htb]
    \centering
    \subfloat[9x9 grid]{\includegraphics[width=0.33\textwidth]{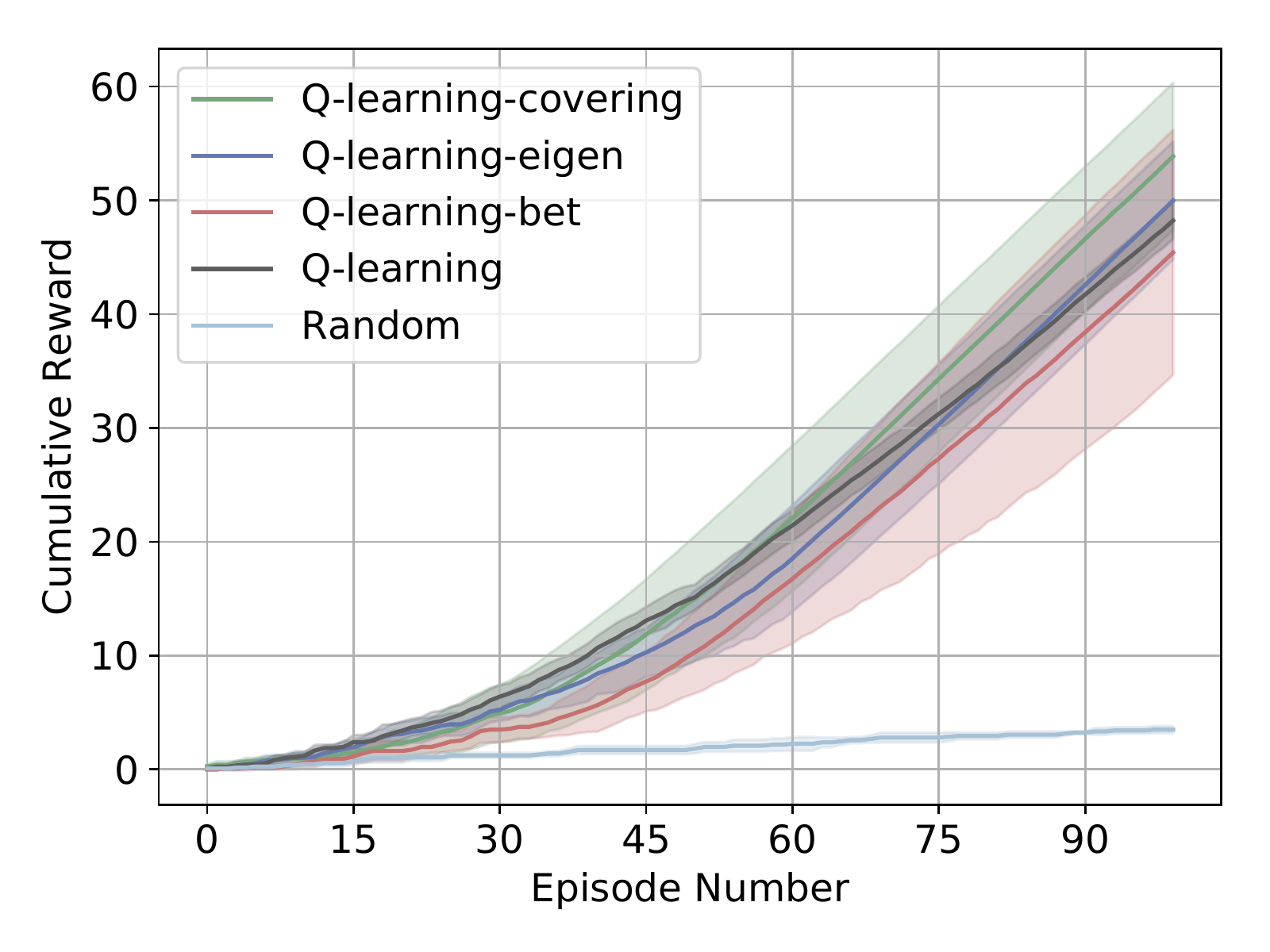}}
    \subfloat[four-room]{\includegraphics[width=0.33\textwidth]{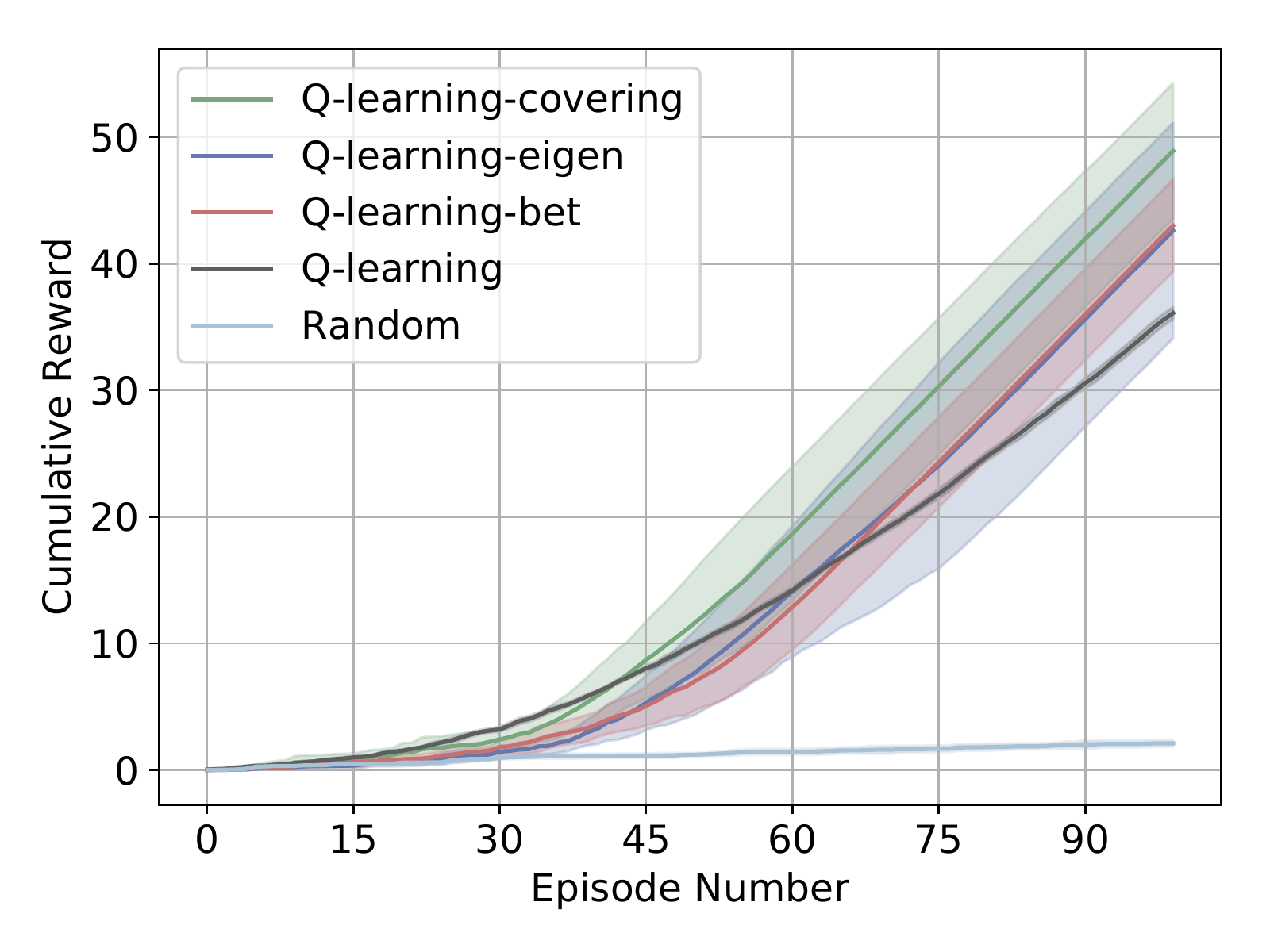}}
    \subfloat[Towers of Hanoi]{\includegraphics[width=0.33\textwidth]{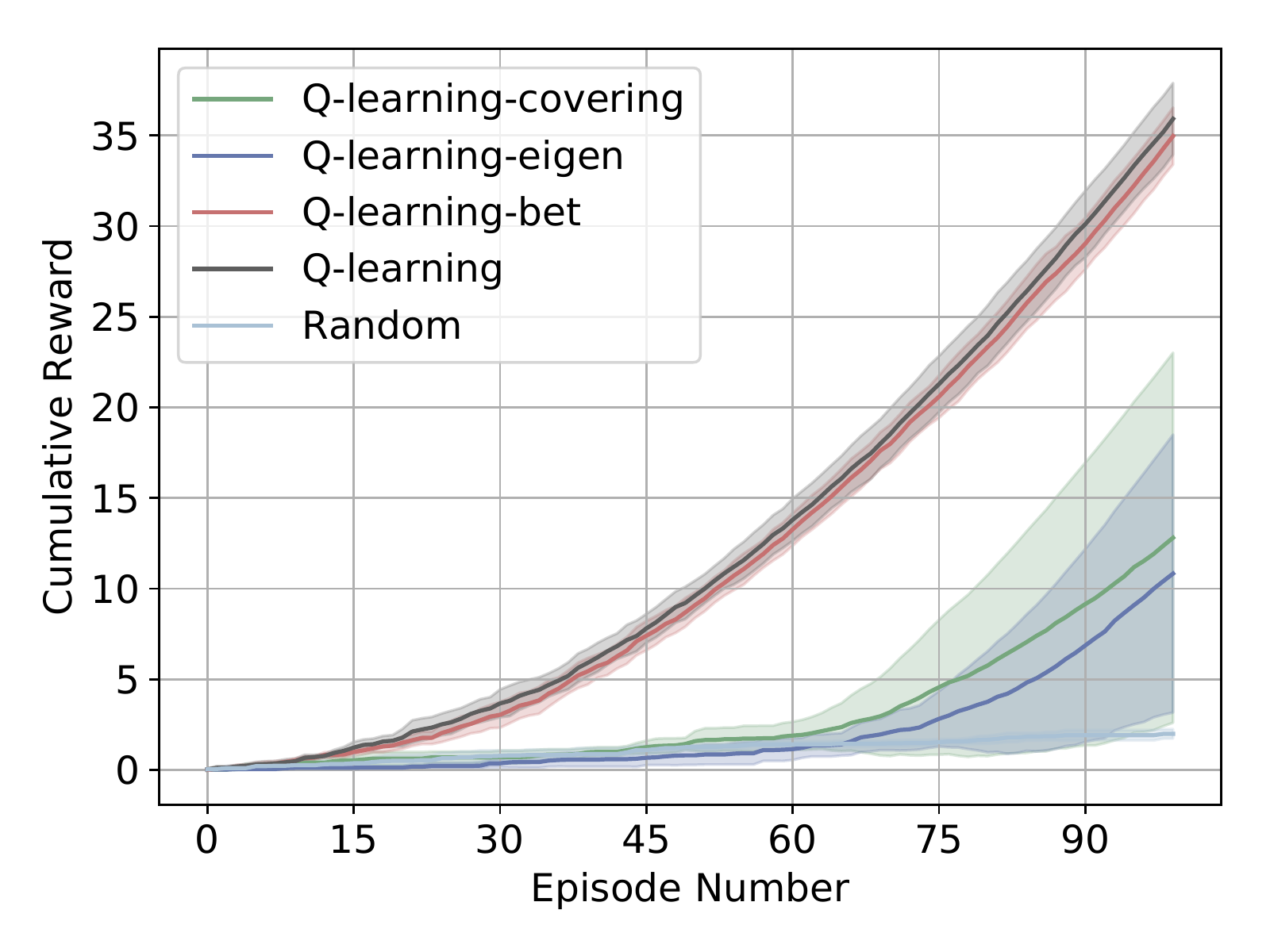}}
    
    \subfloat[Taxi]{\includegraphics[width=0.33\textwidth]{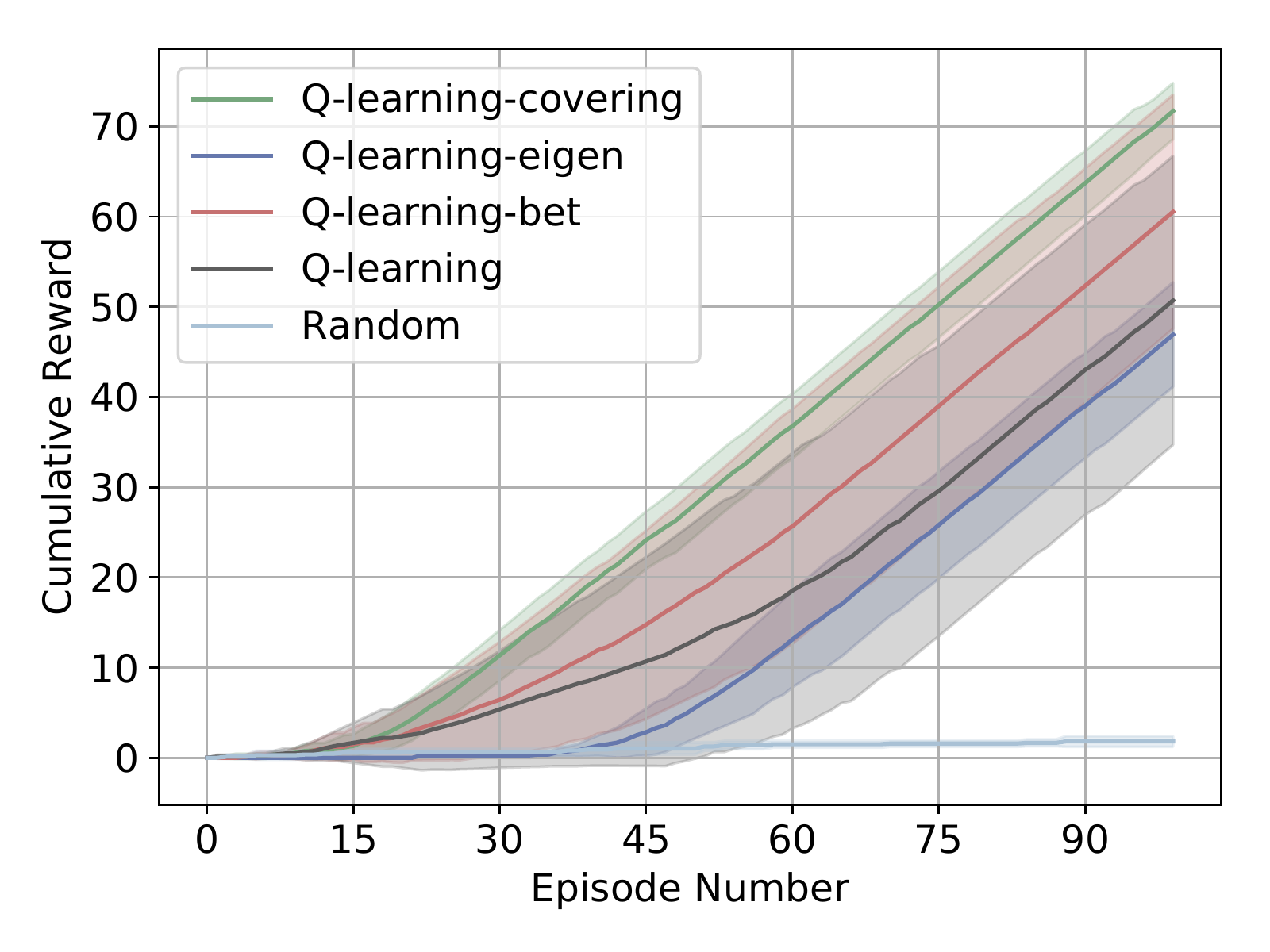}}
    \subfloat[Parr's Maze]{\includegraphics[width=0.33\textwidth]{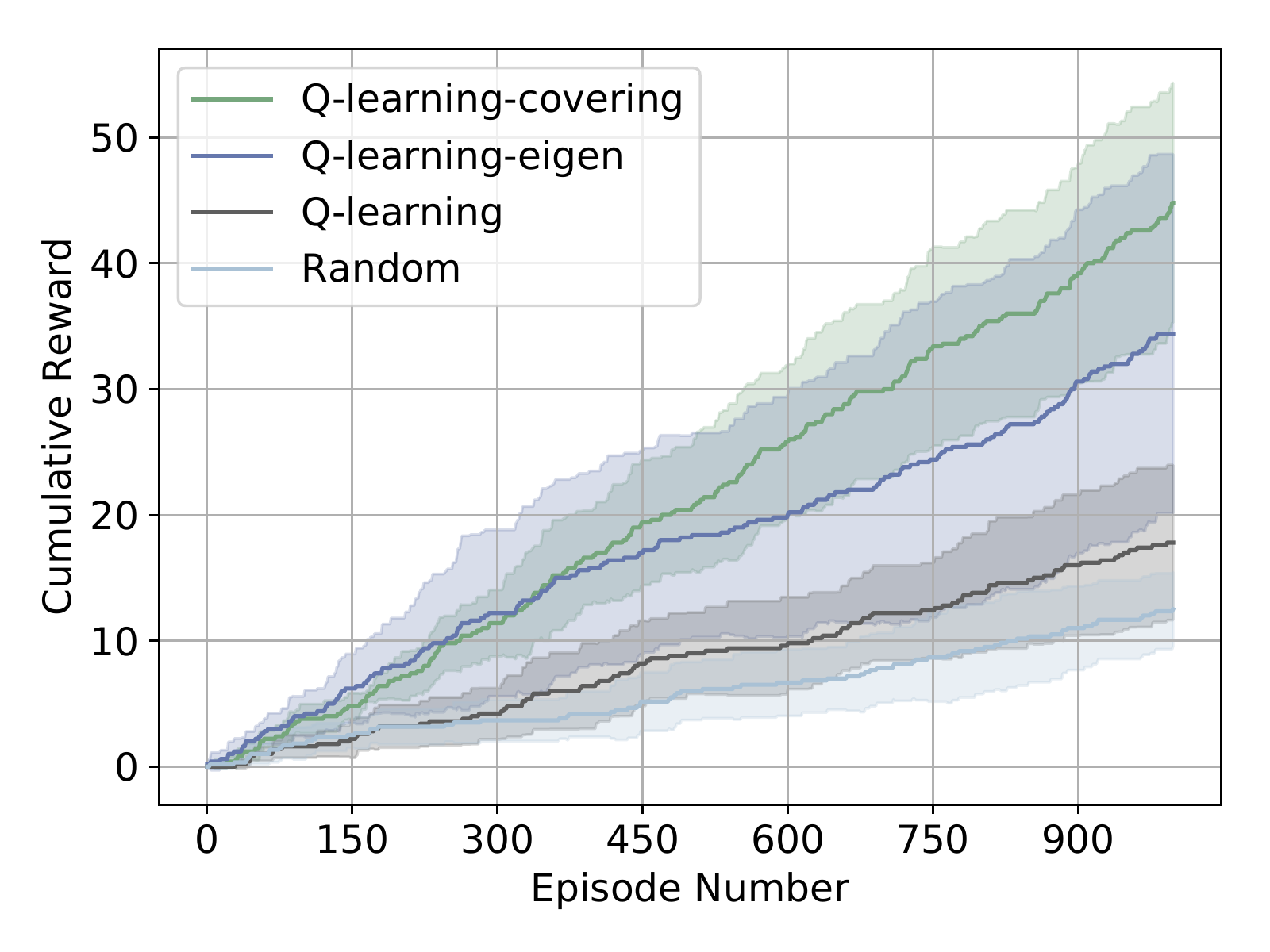}}
    \subfloat[Race Track ]{\includegraphics[width=0.33\textwidth]{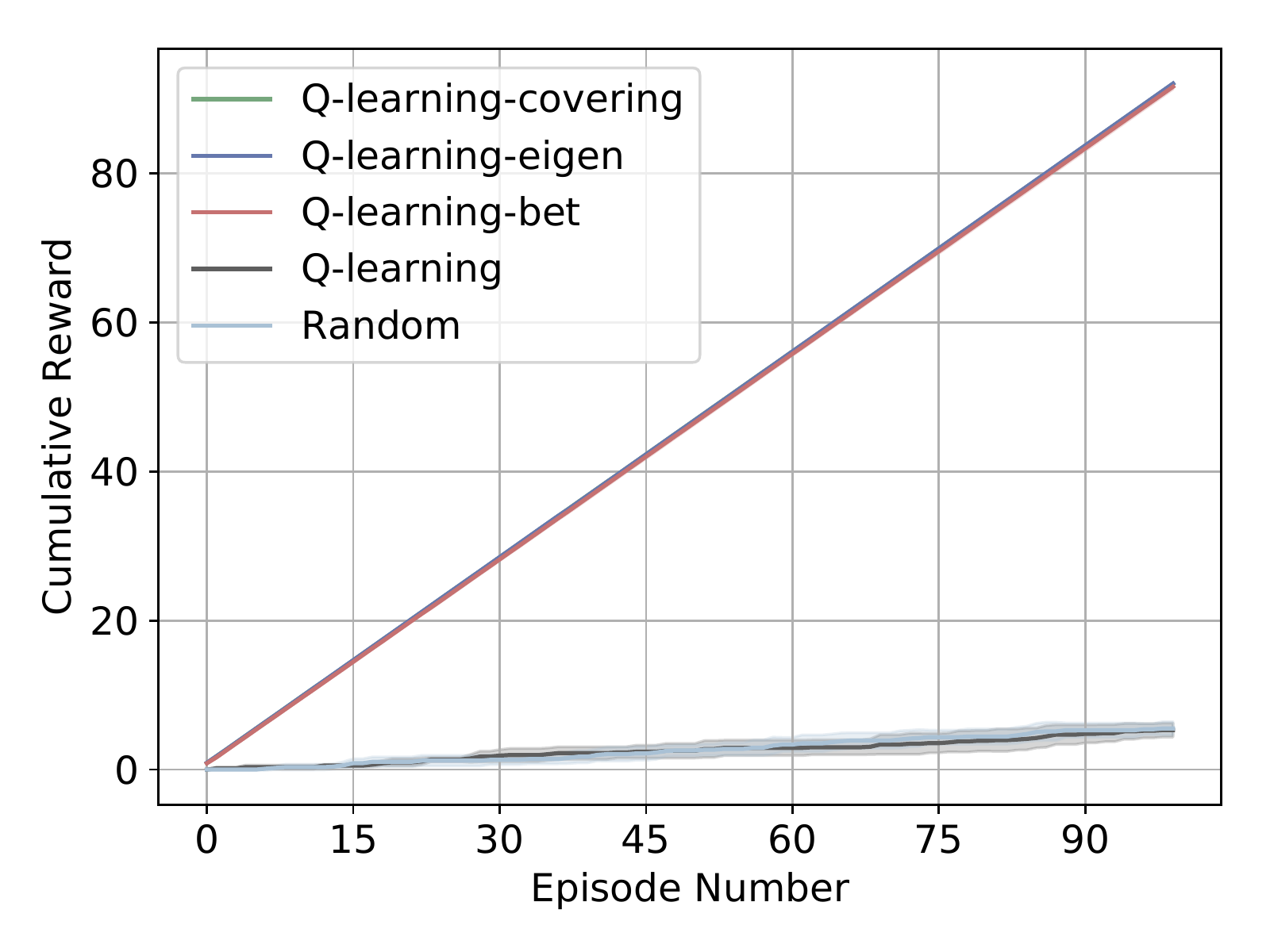}}
    
    \caption{Comparison of performance with different option generation methods. Options are generated offline from the adjacency matrix for 9x9grid, four-room, Towers of Hanoi, and Taxi. Options are generated offline from an incidence matrix for Parr's maze and Race Track. Reward information is not used for generating options.}
    \label{fig:adjacency}
\end{figure*}

We now evaluate the utility of each discovered options for speeding up learning.
We used Q-learning \cite{watkins1992q} ($\alpha=0.1, \gamma=0.95$)
for $100$ episodes, $100$ steps for 9x9 grid, $500$ steps fourroom, Hanoi, and Taxi. 
We generated 8 options with each algorithms using the adjacency matrix representing the state-transition of the MDP.
Figure \ref{fig:adjacency} shows the comparison of accumulated rewards averaged over 5 runs. In all experiments, \algname{} outperformed or was on par with eigenoptions.
Figure \ref{fig:utility} shows the comparison of accumulated rewards with varying number of covering options on fourroom domain. Overall, adding more options improves performance but the added utility is diminished. It is to be expected as the target function is a concave function of the number of edges added which roughly means that the first few edges added lead to a much greater increase in algebraic connectivity than those added later on \cite{ghosh2006growing}.
Next, we evaluated the performance of the options with all states in the initiation set. Figure \ref{fig:subgoal_9x9}, \ref{fig:subgoal_fourroom} shows the comparison of accumulated rewards on fourroom and 9x9 grid domain.

\begin{figure*}[htb]
    \centering
    \centering
    \subfloat[four-room ]{\includegraphics[width=0.33\textwidth]{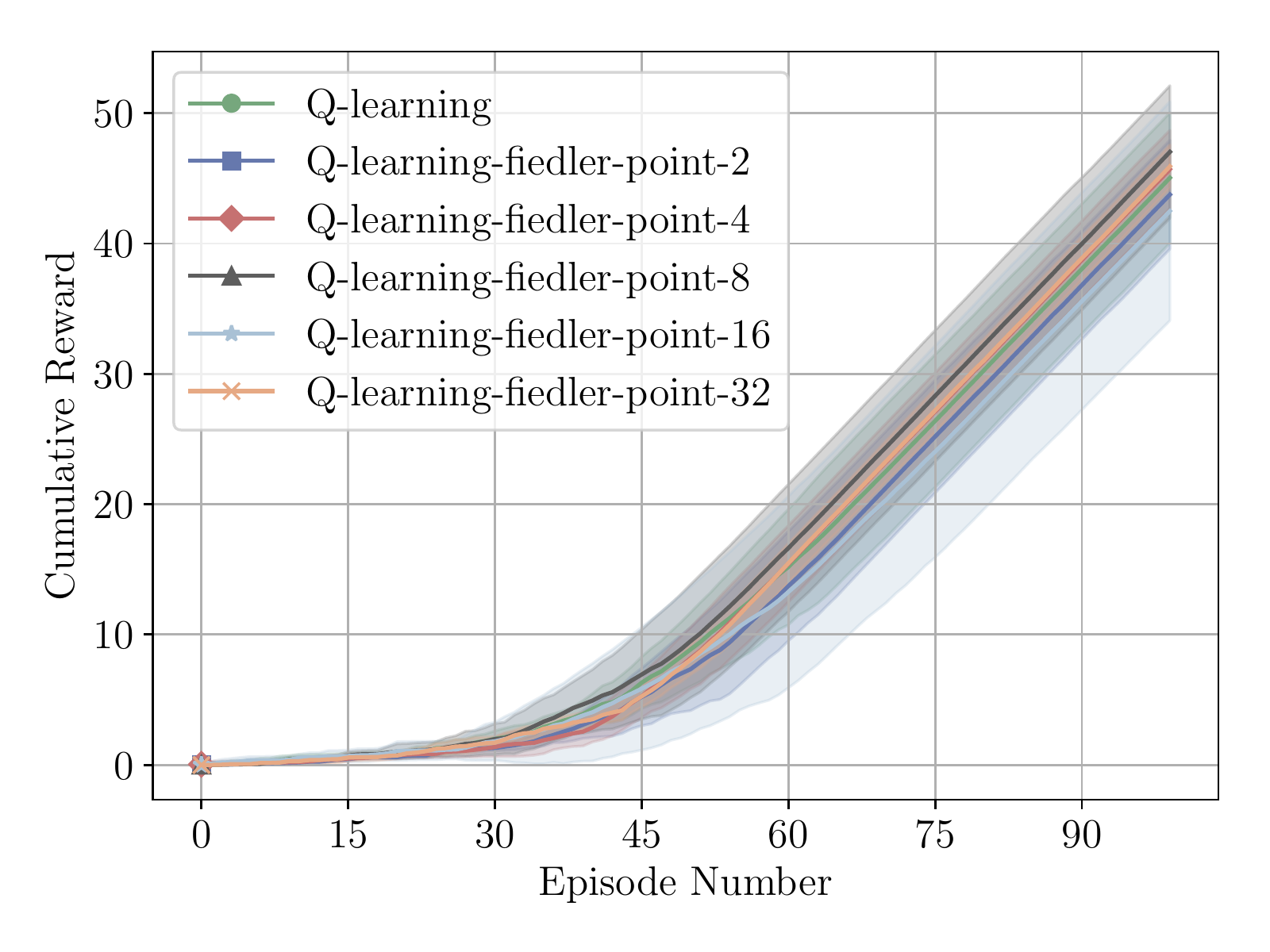}}
    \subfloat[9x9 grid ]{\includegraphics[width=0.33\textwidth]{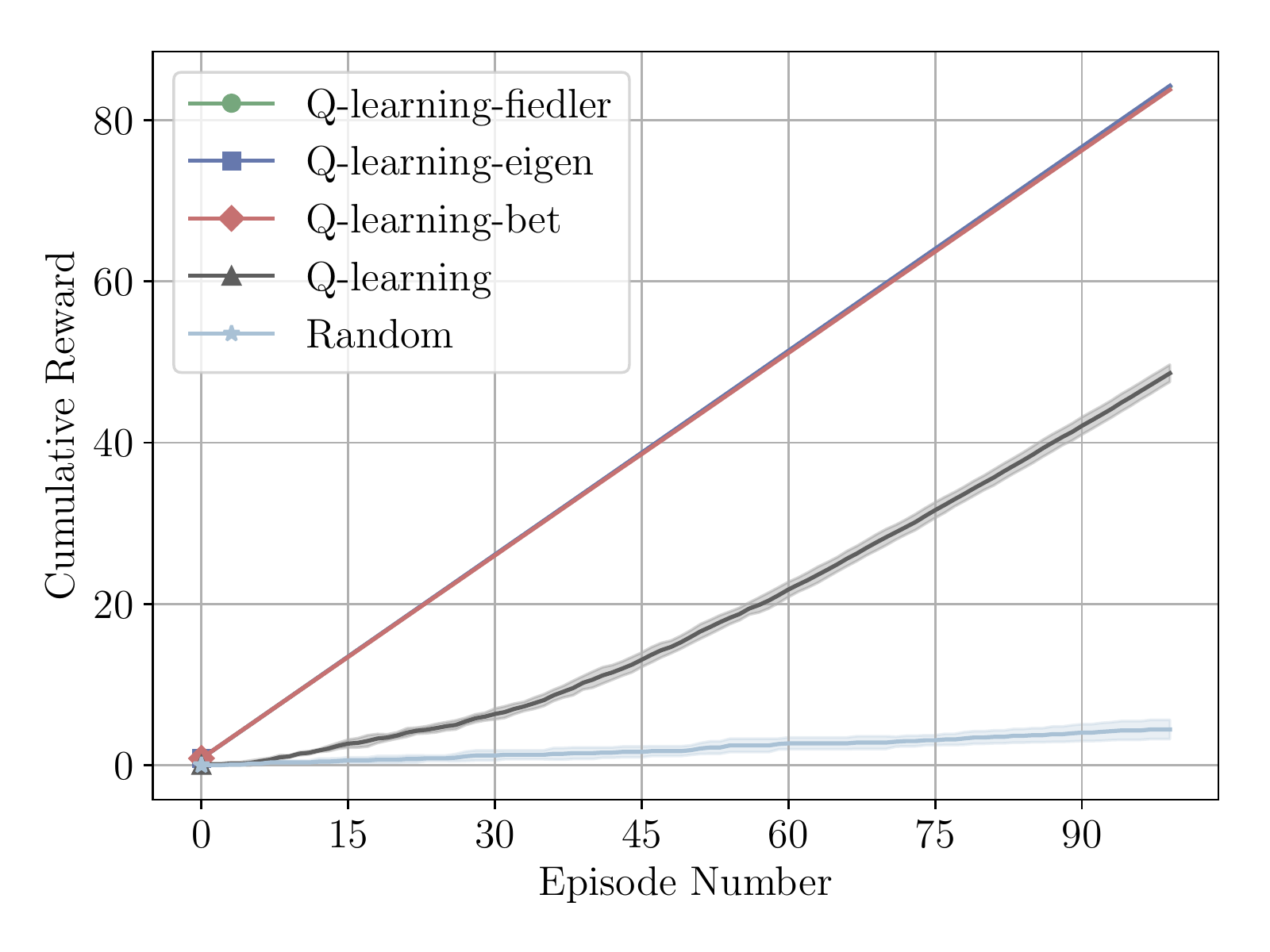} \label{fig:subgoal_9x9}}
    \subfloat[four-room ]{\includegraphics[width=0.33\textwidth]{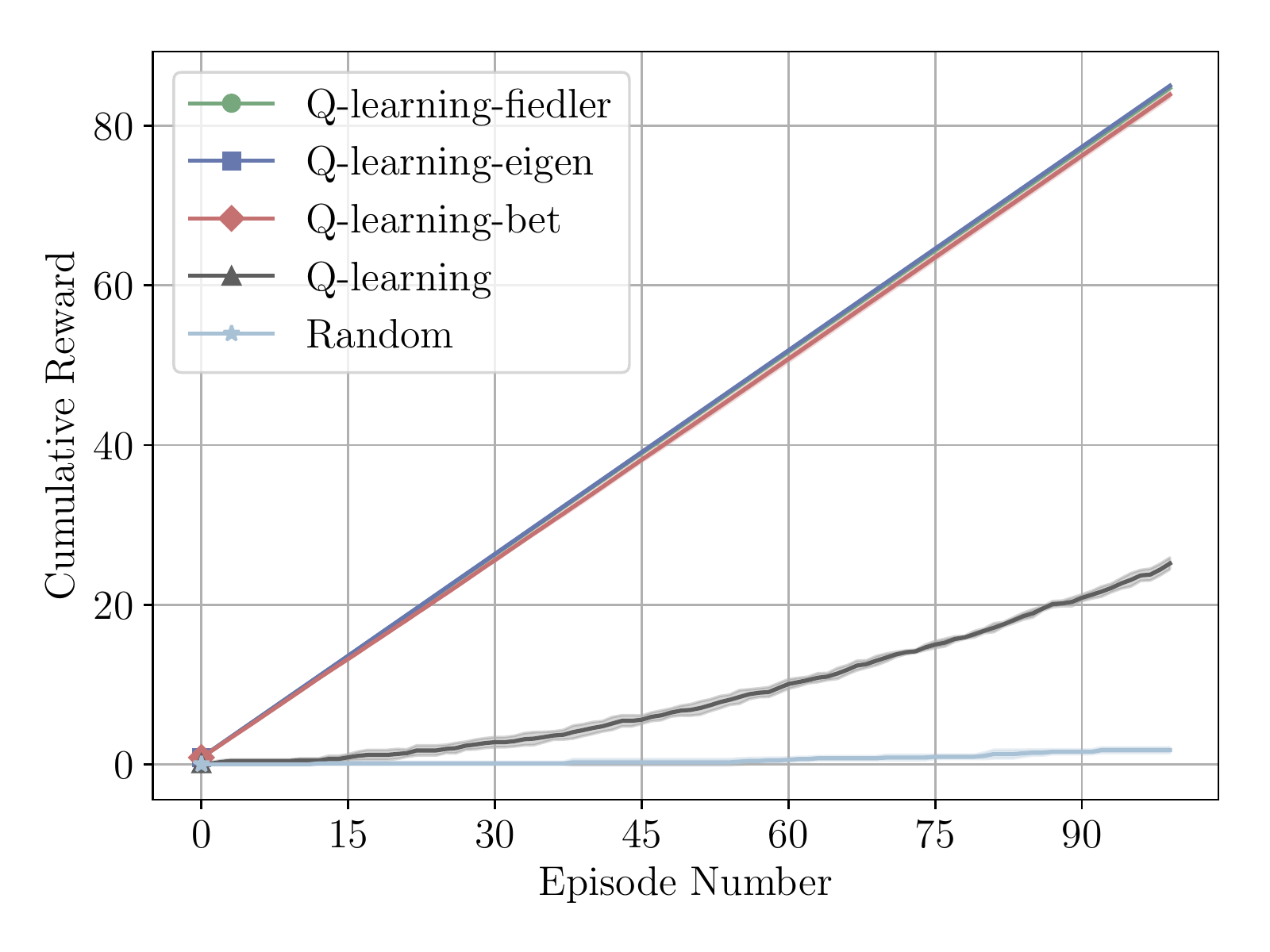} \label{fig:subgoal_fourroom}}
    \caption{(a) Comparison of performance with different number of covering options. (b), (c) Comparison of performance of options which all states are included in the initiation sets.}
    \label{fig:utility}
\end{figure*}

\subsection{Offline Approximate Option Discovery}

In the previous subsection we assumed that the agents have access to the adjacency matrix of the MDP. However, this may be difficult to achieve when the number of states is too large, as agents are not able to observe the whole state transitions in a reasonable amount of time.
Following the evaluation of \namecite{machado2017laplacian}, we evaluate our method using a sample-based approach for option discovery. Instead of giving the agent an access to the whole adjacency matrix, the agent sampled 100 trajectories of a uniform random policy to generate an incidence matrix.
We sampled each trajectory for 1000 steps for Parr's maze and 100 steps for the Race Track domain.
We feed the incidence matrix instead of the adjacency matrix to the option generation method.
As the agent has no prior knowledge on states outside the states in the incidence matrix, the agent terminates the option if it reached the subgoal state or states not in the incidence matrix. 
Other experimental settings are the same as the previous subsection.
Figure \ref{fig:adjacency} shows the resulting performance. Overall, \algname{} is outperforming or on par with eigenoptions. We have no results on betweenness options for Parr's maze as it took more than 20 minutes to generate the options.



\subsection{Online Option Discovery}

In the previous two subsections, we evaluated option discovery methods assuming that the agent has access to the state-transition prior to solving the task itself.
This assumption is reasonable in some situations such as multitask reinforcement learning where the agent is supposed to solve multiple different tasks (reward function) in the same domain (problems with the same transition function).

In this section we evaluate our method on online option discovery.
The agents generate 4 options to add to their option set every 10000 step for Parr's maze and 500 steps for the Towers of Hanoi and Taxi until the number of options reaches 32. 
We learned for 100 episodes, and episodes were 10,000 steps long for Parr's maze and 100 steps for the Towers of Hanoi and Taxi.
We used Q-learning \cite{watkins1992q} ($\alpha=0.1, \gamma=0.95$).
To compute the policy of each option, we feed the trajectories sampled by the agent so far to learn Q-values off-policy ($\alpha=0.1, \gamma=0.95$). We give an intrinsic reward of 1 to the agent when it reaches the subgoal state and ignore the rewards from the environment. 

Figure \ref{fig:online} shows the resulting performance. The agents with options are able to learn the policy faster than the agent only with primitive actions. The agents with options can reliably find the goal state even in Parr's maze whereas an agent with primitive actions is unable to find the goal.

\begin{figure*}
    \centering
	\subfloat[Parr's maze]{\includegraphics[width=0.33\textwidth]{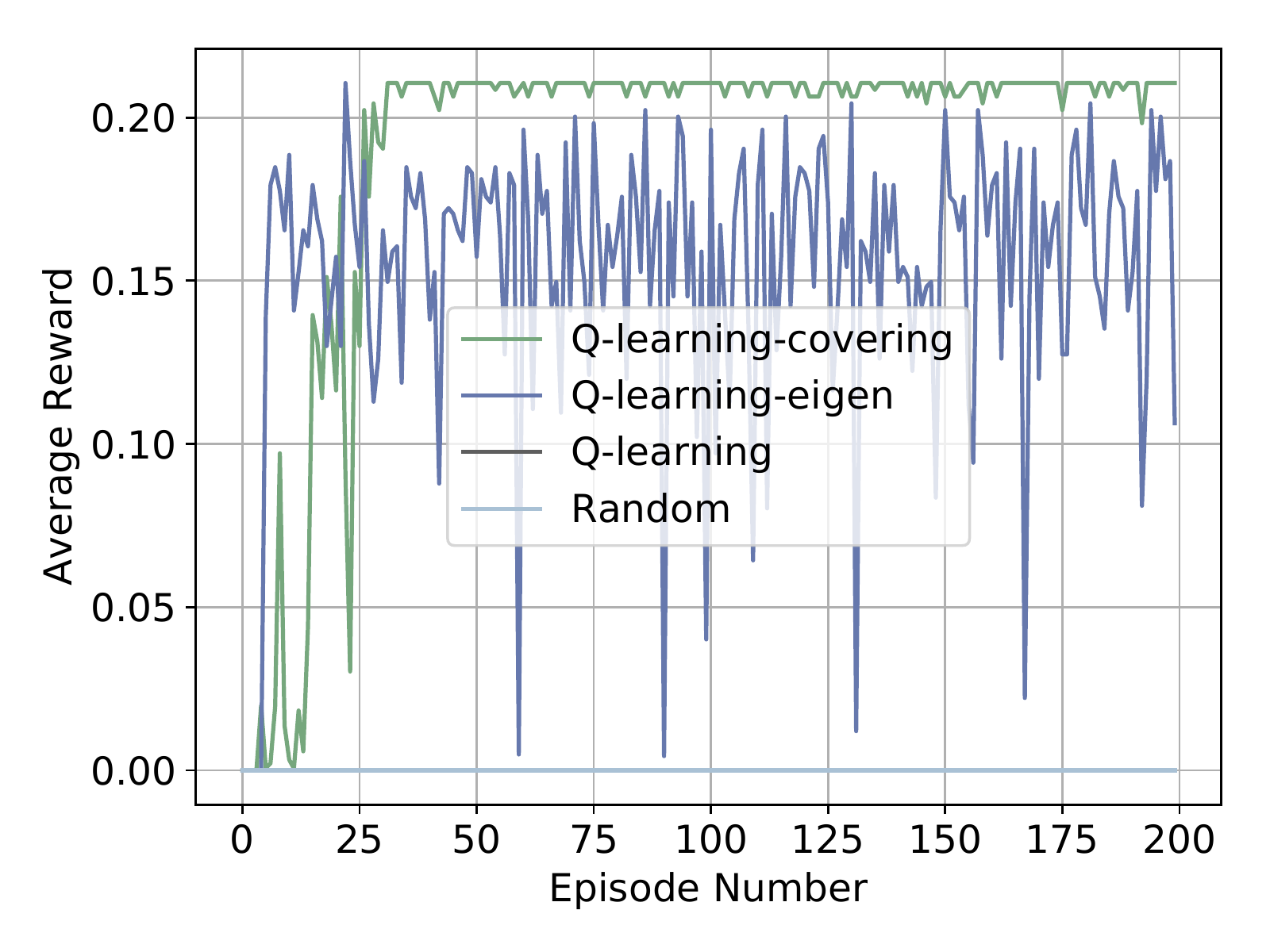}}
	\subfloat[Towers of Hanoi]{\includegraphics[width=0.33\textwidth]{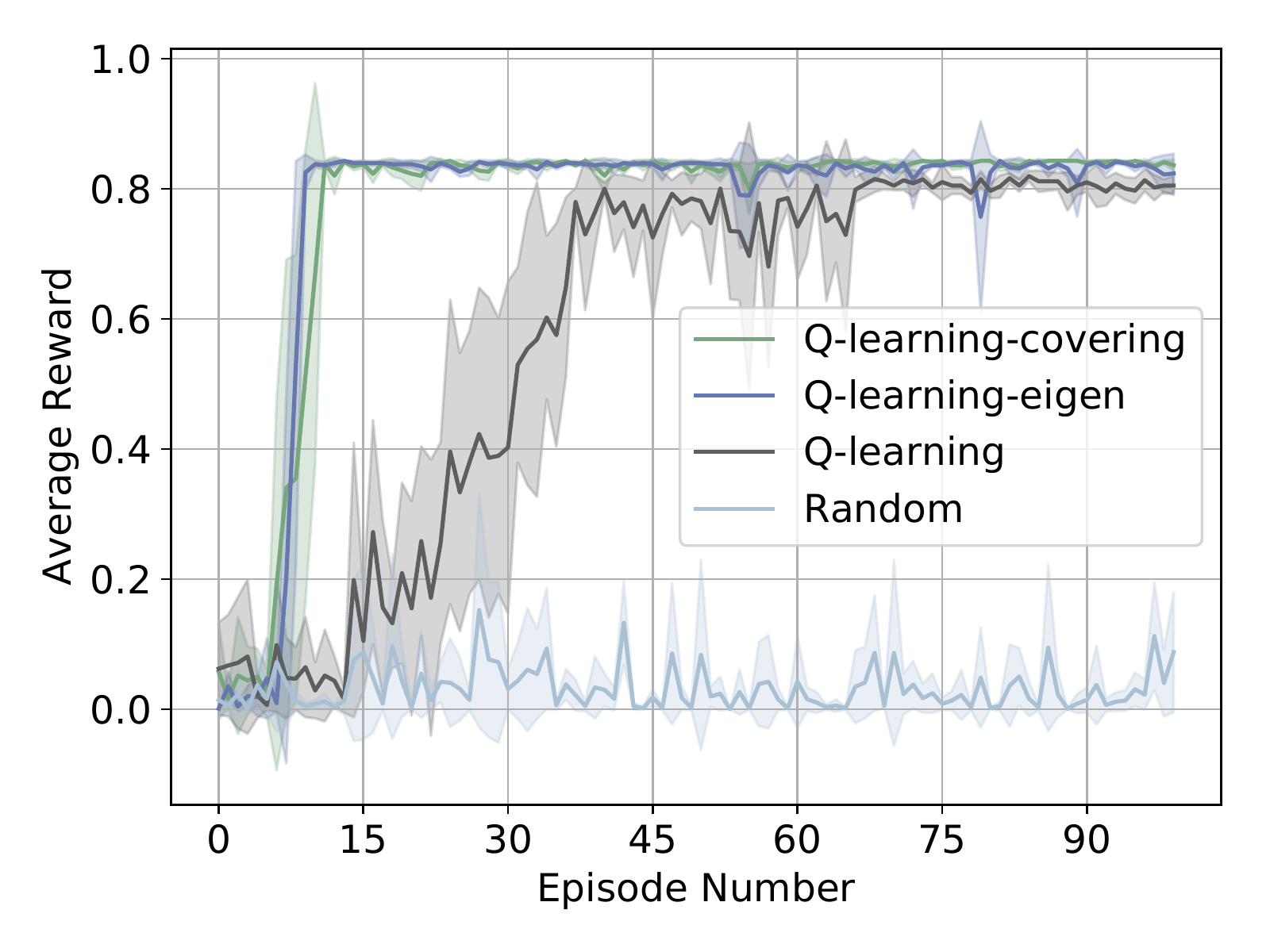}}
	\subfloat[Taxi]{\includegraphics[width=0.33\textwidth]{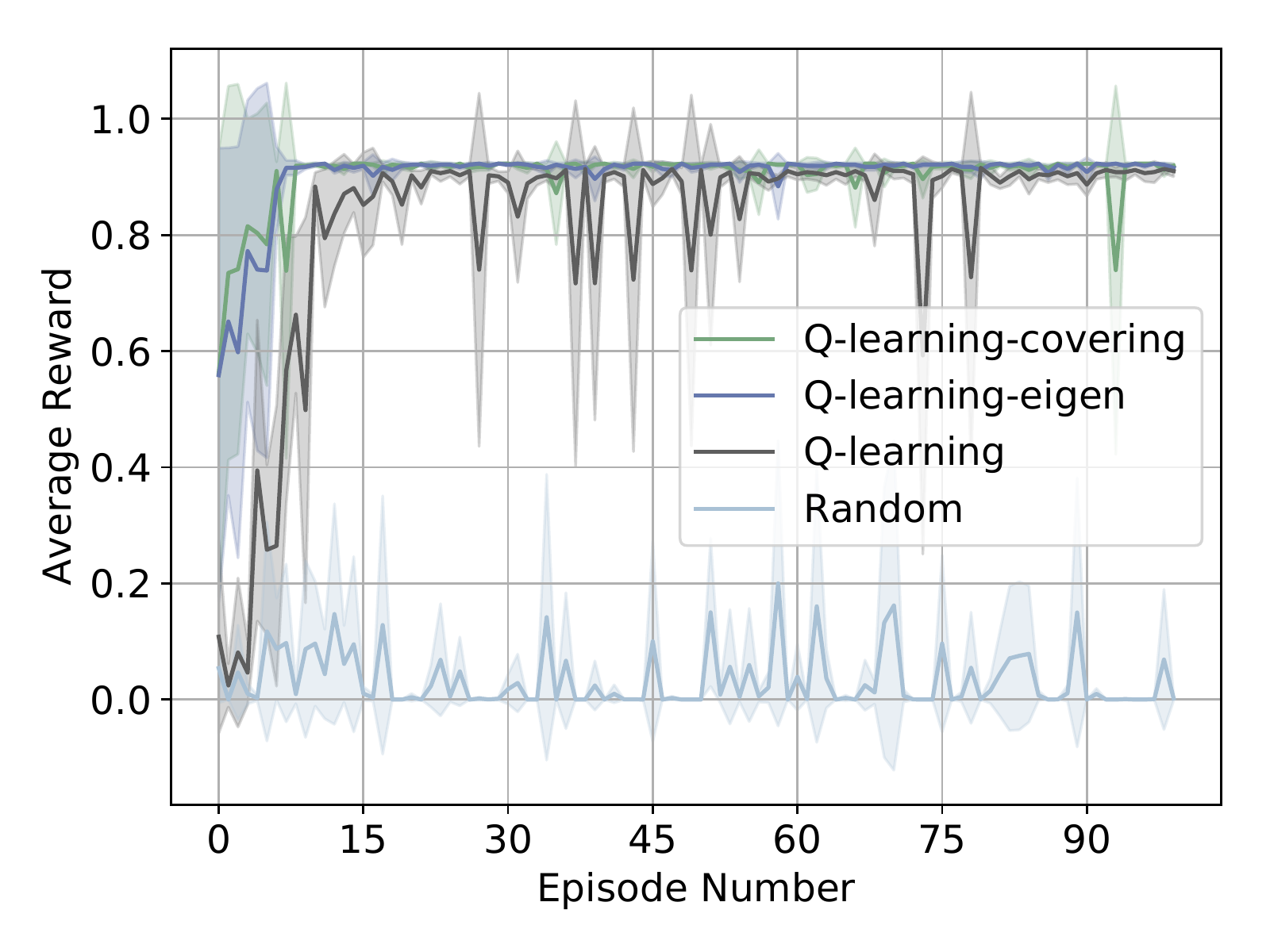}}
    \caption{Comparison of online option generation methods.}
	\label{fig:online}
\end{figure*}

\section{Related Work}
Number of methods have proposed for option discovery
\cite{iba1989heuristic,mcgovern2001automatic,menache2002q,stolle2002learning,Simsek04,Simsek2005,csimcsek2009skill,konidaris2009skill,machado2016learning,kompella2017continual,machado2017laplacian,machado2017eigenoption,eysenbach2018diversity,nair2018visual,riedmiller2018kearning}.

Many option discovery algorithms are based on informative rewards and are thus task dependent.
These methods often decompose the trajectories reaching the rewarding states into options.
Several works have proposed generating intrinsic rewards from trajectories reaching these rewarding states \cite{mcgovern2001automatic,menache2002q,konidaris2009skill}, while other approaches use gradient descent to generate options using the observed rewards \cite{mankowitz2016adaptive,bacon2017option,harb2017waiting}.

However, such approaches are often not applicable to sparse reward problems: if rewards are hard to reach using only primitive actions, options are unlikely to be discovered.
Thus, some works have investigated generating options without using reward signals.
\namecite{stolle2002learning} proposed to set states with high visitation count as subgoal states, resulting in identifying bottleneck states in the four-room domain.
\namecite{csimcsek2009skill} generalized the concept of  bottleneck states to the (shortest-path) betweenness of the graph to capture how pivotal the state is.
\namecite{menache2002q} used a learned model of the environment to run a Max-Flow/Min-Cut algorithm to the state-space graph to identify bottleneck states whereas \namecite{Simsek2005} proposed to apply spectral cut to identify bottlenecks.
These methods generate options to leverage the idea that subgoals are states visited most frequently.
On the other hand, \namecite{Simsek04} proposed to generate options to encourage exploration by generating options to relatively novel states, encouraging exploration. 

\section{Conclusions}

In this paper, we tackled the sparse reward problem by discovering options that encourage exploration.
We introduced the expected cover time which bounds the expected number of steps to reach the undiscovered rewarding state, and introduced
an option discovery method, \algname{}, which adds options that reduces the expected cover time.
We showed analytically that our method guarantees improvement of the expected cover time under certain conditions. We further conduct experiments, finding that \algname{} outperforms the previous state-of-the-art in multiple sparse reward tasks.

\bibliography{ms}
\bibliographystyle{icml2019}
\end{document}